\documentclass[10pt,twocolumn,letterpaper]{article}

\usepackage{iccv}

\usepackage{times}
\usepackage{epsfig}
\usepackage{graphicx}
\usepackage{amsmath}
\usepackage{amssymb}
\usepackage{subfigure}
\usepackage{color}
\usepackage{tabularx} 
\usepackage{booktabs} 
\usepackage{array}
\usepackage{multirow}
\usepackage[table]{xcolor}
\usepackage{docmute}
\usepackage{fancyhdr}
\usepackage{amsthm}
\usepackage{color}

\newtheorem{proposition}{Proposition}

\DeclareMathOperator*{\trace}{trace}

\DeclareMathOperator*{\quot}{quot}


\usepackage[pagebackref=true,breaklinks=true,letterpaper=true,colorlinks,bookmarks=false]{hyperref}

\iccvfinalcopy 



\begin{document}
	
	\title{Minimal Cases for Computing the Generalized Relative Pose using Affine Correspondences
	}
	\author{Banglei Guan$^{1}$,\ \ \ Ji Zhao\thanks{Corresponding author.},\ \ \ Daniel Barath$^{2}$ \ \ and \ \ Friedrich Fraundorfer$^{3,4}$\\
		{\small$^{1}$College of Aerospace Science and Engineering, National University of Defense Technology, China} \\{\small$^{2}$Computer Vision and Geometry Group, Department of Computer Science, ETH Z{\"u}rich} 
		\\{\small$^{3}$Institute for Computer Graphics and Vision, Graz University of Technology, Austria} \\{\small$^{4}$Remote Sensing Technology Institute, German Aerospace Center, Germany}\\
		{\tt\small guanbanglei12@nudt.edu.cn \ zhaoji84@gmail.com \ dbarath@ethz.ch \  fraundorfer@icg.tugraz.at}
	}
	
	\maketitle
	
	\begin{abstract}
		We propose three novel solvers for estimating the relative pose of a multi-camera system from affine correspondences (ACs). A new constraint is derived interpreting the relationship of ACs and the generalized camera model. Using the constraint, we demonstrate efficient solvers for two types of motions assumed. Considering that the cameras undergo planar motion, we propose a minimal solution using a single AC and a solver with two ACs to overcome the degenerate case. Also, we propose a minimal solution using two ACs with known vertical direction, e.g., from an IMU. Since the proposed methods require significantly fewer correspondences than state-of-the-art algorithms, they can be efficiently used within RANSAC for outlier removal and initial motion estimation. The solvers are tested both on synthetic data and on real-world scenes from the KITTI odometry benchmark. It is shown that the accuracy of the estimated poses is superior to the state-of-the-art techniques.
	\end{abstract}
	
	\vspace{-20pt}
	\section{Introduction}
	Relative pose estimation from two views of a camera, or a multi-camera system is regarded as a fundamental problem in computer vision~\cite{HartleyZisserman-472,clipp2008robust,scaramuzza2011visual,schoenberger2016sfm,Zhao2020GEM}, which plays an important role in simultaneous localization and mapping (SLAM) and structure-from-motion (SfM). Thus, improving the accuracy, efficiency and robustness of relative pose estimation algorithms is always an important research topic~\cite{hee2014relative,ventura2015efficient,Agarwal2017,guan2018visual,barath2020making,Eichhardt2020Relative,Li2020Relative}. Motivated by the fact that multi-camera systems are available in self-driving cars, micro aerial vehicles or AR headsets, this paper investigates the problem of estimating the relative pose of multi-camera systems from affine correspondences (ACs), see Fig.~\ref{fig:AffineTransformation}.
	\begin{figure}[t]
		\begin{center}
			\includegraphics[width=0.95\linewidth]{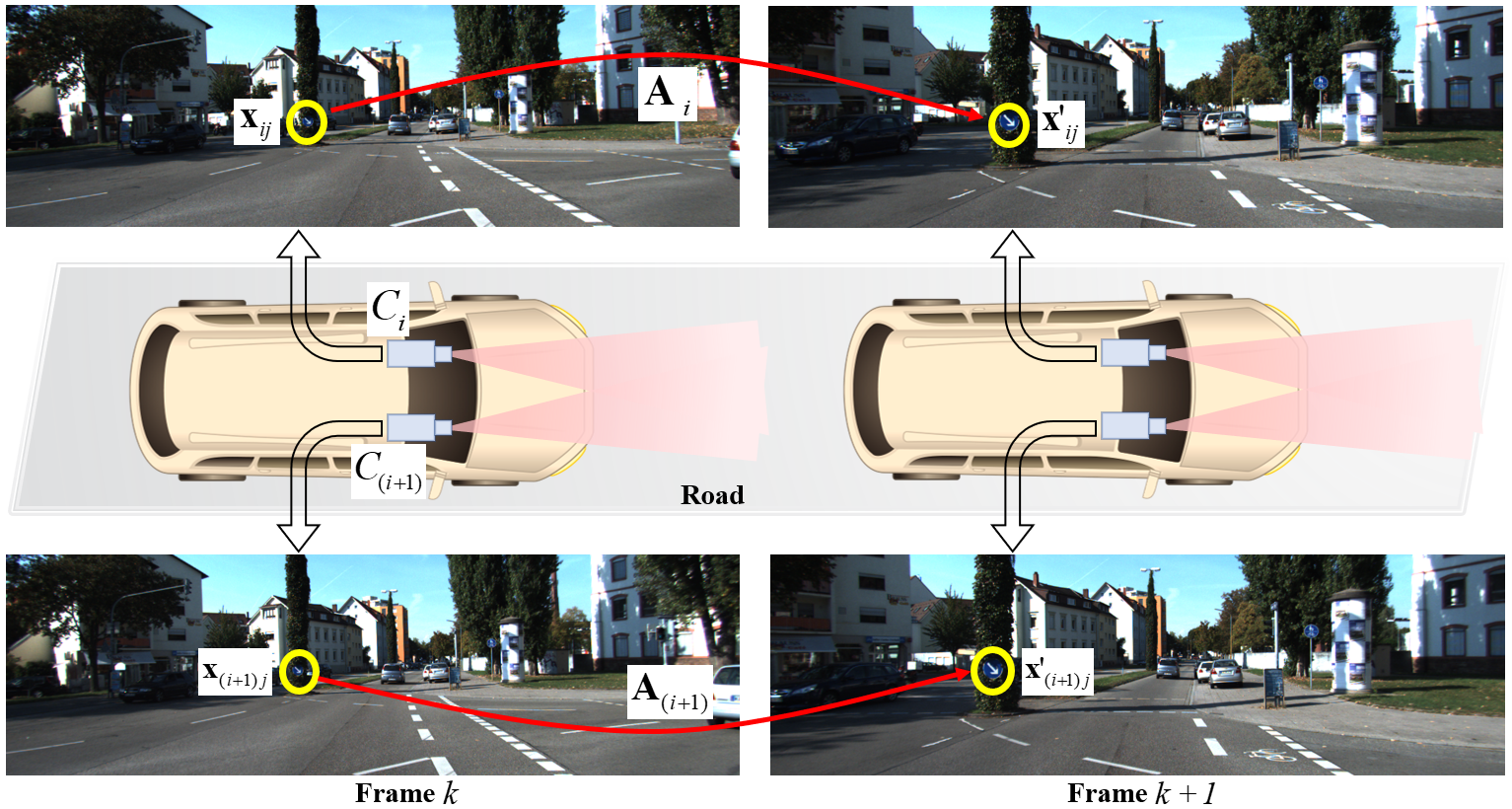}
		\end{center}
		\vspace{-3pt}
		\caption{An affine correspondence in camera $C_i$ between consecutive frames $k$ and $k+1$. The local affine transformation $\mathbf{A}$ relates the infinitesimal patches around point correspondence (${\mathbf{x}}_{ij}$, ${\mathbf{x}}'_{ij}$).}
		\vspace{-8pt}
		\label{fig:AffineTransformation}
	\end{figure}
	
	Since a multi-camera system contains multiple individual cameras connected by being fixed to a single rigid body, it has the advantage of large field-of-view and high accuracy~\cite{Sweeney_2015_ISMAR,Fragoso_2020_CVPR}. The main difference of a multi-camera system and a standard pinhole camera is the absence of a single projection center~\cite{pless2003using}. Due to the different camera model, the relative pose estimation problem of multi-camera systems~\cite{henrikstewenius2005solutions} is different from the monocular cameras~\cite{nister2004efficient,Guan2020CVPR}, which  results in different equations.
	In order to remove outlier matches, most of the state-of-the-art SLAM and SfM pipelines using a multi-camera system~\cite{hane20173d,heng2019project} apply the relative pose estimation algorithms repeatedly in a robust estimation framework, \emph{e.g.} the Random Sample Consensus (RANSAC)~\cite{fischler1981random}. This outlier removal process has to be efficient, which directly affects the real-time performance of SLAM and SfM. The computational complexity and, thus, the processing time of the RANSAC procedure depends exponentially on the number of points required for the relative pose estimation of multi-camera system.  
	
	Therefore, exploring the minimal solutions for relative pose estimation of multi-camera system is of significant importance and has received sustained attention~\cite{henrikstewenius2005solutions,li2008linear,clipp2008robust,kim2009motion,lim2010estimating,ventura2015efficient,kneip2016generalized}. The idea of deriving minimal solutions for relative pose estimation of multi-camera systems ranges back to the work of Stew{\'e}nius \emph{et al.} with the 6-point method~\cite{henrikstewenius2005solutions}. Then other classical works have been subsequently proposed, such as the 17-point linear method~\cite{li2008linear} and techniques based on iterative optimization~\cite{kneip2014efficient}. 
	The minimal number of necessary points can be further reduced by taking additional motion constraints into account~\cite{hee2013motion} or exploiting the measurements from other sensors, like an inertial measurement unit (IMU)~\cite{hee2014relative,sweeney2014solving,sweeney2015computing,liu2017robust,Martyushev2020Efficient}. Typically, the assumption of planar motion or considering known vertical direction are common for self-driving cars and ground robots~\cite{choi2018fast,hajder2019relative,Guan2020CVPR,SaurerVasseur-457,Li2020Relative}, which makes the outlier removal more efficient and numerically more stable.

	All previously mentioned relative pose solvers estimate the pose parameters from a set of point correspondences (PCs), \emph{e.g.}, coming from SIFT~\cite{Lowe2004Distinctive} or SURF~\cite{Bay2008346} detectors. Due to containing more information about the underlying surface geometry than PCs, ACs enable to estimate the pose from fewer correspondences. In this paper, we focus on the relative pose estimation of a multi-camera system from ACs, instead of PCs. The contributions of this paper are:
	\begin{itemize}
		\vspace{-3pt}
		\item A new constraint that interprets the relationship of ACs and the generalized camera model is derived under general motion. This constraint can be easily generalized to special cases of multi-camera motion, \emph{e.g.}, planar motion and known vertical direction.
		\vspace{-3pt}
		\item When the motion is planar (\emph{i.e.}, the body to which the cameras are fixed moves on a plane; 3DOF), a single AC is sufficient to recover the planar motion of a multi-camera system. In order to deal with the degenerate case of the 1AC solver, we also propose a new method to estimate the relative pose from two ACs. The point-based solver~\cite{hee2013motion} requires at least two PCs and requires the Ackermann motion model to hold.		
		\vspace{-3pt}
		\item A third solver is proposed for the case when the vertical direction is known (4DOF), \emph{e.g.}, from an IMU attached to the multi-camera system. We show that two ACs are enough to recover the relative pose. In contrast, the point-based solver requires four PCs~\cite{hee2014relative,sweeney2014solving}. 
	\end{itemize}
	
	\section{\label{sec:relatedwork}Related Work}
	Due to the absence of a single center of projection, the camera model of multi-camera systems is different from the standard pinhole camera. Pless proposed to express the light rays as Pl\"{u}cker lines and derived the generalized camera model which has become a standard representation for the multi-camera systems~\cite{pless2003using}. Stew{\'e}nius~\emph{et al.} proposed the first minimal solution to estimate the relative pose of a multi-camera system from 6 PCs, which produces up to 64 solutions~\cite{henrikstewenius2005solutions}. Li~\emph{et al.} provided several linear solvers to compute the relative pose, among which the most commonly used one requires 17 PCs~\cite{li2008linear}. Kneip~\emph{et al.} proposed an iterative approach for the relative pose estimation based on eigenvalue minimization~\cite{kneip2014efficient}. Ventura~\emph{et al.} used first-order approximation of the rotation to simplify the problem and estimated the relative pose from 6 PCs~\cite{ventura2015efficient}.        
	
	By considering additional motion constraints or using additional information provided by an IMU, the number of required PCs can be further reduced. Lee~\emph{et al.} presented a minimal solution with two PCs for the ego-motion estimation of a multi-camera system, which constrains the relative motion by the Ackermann motion model~\cite{hee2013motion}. In addition, a variety of algorithms have been proposed when a common direction of the multi-camera system is known,~\emph{i.e.} an IMU provides the roll and pitch angles of the multi-camera system. The relative pose estimation with known vertical direction requires a minimum of 4 PCs~\cite{hee2014relative,sweeney2014solving,liu2017robust}.
	
	Exploiting the additional affine parameters besides the image coordinates has been recently proposed for the relative pose estimation of monocular cameras, which reduces the number of required points significantly. Bentolila~\emph{et al.} estimated the fundamental matrix from three ACs~\cite{bentolila2014conic}. Raposo~\emph{et al.} computed homography and essential matrix using two ACs~\cite{raposo2016theory}. Barath~\emph{et al.} derived the constraints between the local affine transformation and the essential matrix and recovered the essential matrix from two ACs~\cite{barath2018efficient}. Hajder~\emph{et al.}~\cite{hajder2019relative} and Guan~\emph{et al.}~\cite{Guan2020CVPR, Guan_TCYB2021} proposed several minimal solutions for relative pose from a single AC under the planar motion assumption or with knowledge of a vertical direction. The above mentioned works are only suitable for the monocular perspective camera. For multi-camera systems, Alyousefi~\emph{et al.} recently proposed a linear solver to estimate the relative pose using 6 ACs~\cite{alyousefi2020multi}. Guan~\emph{et al.} estimated the relative pose from 2 ACs by utilizing a first-order rotation approximation~\cite{Guan_ICRA2021}. In this paper, we focus on the minimal number of ACs to estimate the relative pose of a multi-camera system. Table~\ref{tab:summary_solver} shows a summary of the solvers, including the DOF of the motion, feature types and number of points required.
	\vspace{-5pt} 
	\begin{table}[htbp]
		\caption{Relative pose solvers for multi-camera systems.}
		\begin{center}
			\setlength{\tabcolsep}{1.0mm}{
				\scalebox{0.70}{
					\begin{tabular}{|c||c|c|c|c|c|c|c|c|c|c|}
						\hline
						\small{Solver} &  \small{\cite{li2008linear}}  & \small{\cite{kneip2014efficient}} & \small{\cite{henrikstewenius2005solutions}} &  \small{\cite{alyousefi2020multi}}&  \small{\cite{hee2014relative}}  & {\small{\cite{sweeney2014solving}}}   & {\small{\cite{liu2017robust}}}   & \small{\textbf{1AC~plane}}   & \small{\textbf{2AC~plane}} & \small{\textbf{2AC~vertical}} \\
						\hline
						\small{Motion} & \multicolumn{4}{c|}{\small{6DOF}} & \multicolumn{3}{c|}{\small{4DOF}} & \multicolumn{2}{c|}{\small{3DOF}} & \multicolumn{1}{c|}{\small{4DOF}}\\ 		
						\hline
						\small{Feature}& \multicolumn{3}{c|}{\small{PCs}} & \multicolumn{1}{c|}{\small{ACs}} & \multicolumn{3}{c|}{\small{PCs}} & \multicolumn{3}{c|}{\small{ACs}}\\ 						
						\hline
						\small{Point \#}& \multicolumn{1}{c|}{\small{17}} & \multicolumn{1}{c|}{\small{8}} & \multicolumn{2}{c|}{\small{6}} & \multicolumn{3}{c|}{\small{4}} & \multicolumn{1}{c|}{\small{1}} & \multicolumn{2}{c|}{\small{2}}\\ 			
						\hline
			\end{tabular}}}
		\end{center}
		\label{tab:summary_solver}
	\vspace{-30pt} 
	\end{table}	
	
	\section{\label{sec:6DOFmotion}Geometric Constraints from ACs}
	A multi-camera system is made up of individual cameras denoted by $C_i$, as shown in Fig.~\ref{fig:AffineTransformation}. Take an AC seen by same camera for an example. The geometric constraints can be easily generalized to the case that the AC is seen by different cameras. The extrinsic parameters of camera $C_i$ expressed in a multi-camera reference frame are represented as $(\mathbf{R}_i,\mathbf{t}_i)$. For general motion, there is a 3DOF rotation and a 3DOF translation between two reference frames at time $k$ and $k+1$. Rotation $\mathbf{R}$ using Cayley parameterization and translation $\mathbf{t}$ can be written as: 
	\begin{equation}
	\small
	\begin{aligned}	
	& \mathbf{R} = \frac{1}{1+q_x^2+q_y^2+q_z^2} \ . \\ 
	& \begin{bmatrix}{1+q_x^2-q_y^2-q_z^2}&{2{q_x}{q_y}-2{q_z}}&{2{q_y}+2{q_x}{q_z}}\\
	{2{q_x}{q_y}+2{q_z}}&{1-q_x^2+q_y^2-q_z^2}&{2{q_y}{q_z}-2{q_x}}\\
	{2{q_x}{q_z}-2{q_y}}&{2{q_x}+2{q_y}{q_z}}&{1-q_x^2-q_y^2+q_z^2}
	\end{bmatrix},\\
	\end{aligned}		
	\label{eq:R6dof1}
	\end{equation}
	\begin{equation} \\
	\mathbf{t} = \begin{bmatrix}
	{t_x}& \
	{t_y}& \
	{t_z}
	\end{bmatrix}^T,
	\label{eq:T6dof1}
	\end{equation} 
	where $[1,q_x,q_y,q_z]^T$ is a homogeneous quaternion vector. Note that 180 degree rotations are prohibited in Cayley parameterization, but this is a rare case for consecutive frames.  
	
	\subsection{Generalized Camera Model}
	We give a brief description of generalized camera model (GCM)~\cite{pless2003using}. Let us denote an AC in camera $C_i$ between consecutive frames $k$ and $k+1$ as $({\mathbf{x}}_{ij}, {\mathbf{x}}'_{ij}, \mathbf{A})$, where ${\mathbf{x}}_{ij}$ and ${\mathbf{x}}'_{ij}$ are the normalized homogeneous image coordinates of feature point $j$ and $\mathbf{A}$ is a 2$\times$2 local affine transformation. Indices $i$ and $j$ are the camera and point index, respectively. The related local affine transformation $\mathbf{A}$ is a 2$\times$2 linear transformation which relates the infinitesimal patches around ${\mathbf{x}}_{ij}$ and ${\mathbf{x}}'_{ij}$~\cite{barath2018five}. The normalized homogeneous image coordinates $({\mathbf{p}}_{ij}, {\mathbf{p}}'_{ij})$ expressed in the multi-camera reference frame are given as
	\begin{equation}
	{\mathbf{p}}_{ij} = {\mathbf{R}_i}{\mathbf{x}}_{ij},\qquad
	{\mathbf{p}}'_{ij} = {\mathbf{R}_i}{\mathbf{x}}'_{ij}.
	\label{eq:imagecoord6dof}
	\end{equation}
	The unit direction of rays $({\mathbf{u}}_{ij}, {\mathbf{u}}'_{ij})$ expressed in the multi-camera reference frame are given as: ${\mathbf{u}}_{ij} = {\mathbf{p}}_{ij}/{{\|}{{\mathbf{p}}_{ij}}{\|}}$, ${\mathbf{u}}'_{ij} = {\mathbf{p}}'_{ij}/{{\|}{{\mathbf{p}}'_{ij}}{\|}}$. The 6-dimensional vector Pl\"{u}cker lines corresponding to the rays are denoted as ${\mathbf{l}}_{ij} = [{\mathbf{u}}_{ij}^T, \ ({\mathbf{t}}_i\times {\mathbf{u}}_{ij})^T]^T$, ${\mathbf{l}}'_{ij} = [{{\mathbf{u}}'_{ij}}^T, \ ({\mathbf{t}}_i\times {\mathbf{u}}'_{ij})^T]^T$. The generalized epipolar constraint is written as~\cite{pless2003using}
	\begin{equation} 
	{{\mathbf{l}}'^T_{ij}}
	\begin{bmatrix} {{{\left[ {\mathbf{t}} \right]}_ \times }{\mathbf{R}}}, & {\mathbf{R}} \\ {\mathbf{R}},  & {\mathbf{0}} \end{bmatrix}
	{{\mathbf{l}}_{ij}} = 0,
	\label{GECS6dof}
	\end{equation}
	where ${{\mathbf{l}}'^T_{ij}}$ and ${{\mathbf{l}}_{ij}}$ are Pl\"{u}cker lines between two consecutive frames at time $k$ and $k+1$. 
	
	\subsection{Affine Transformation Constraint}
	We denote the transition matrix of camera coordinate system $C_i$ between consecutive frames $k$ and $k+1$ as $(\mathbf{R}_{Ci},\mathbf{t}_{Ci})$, which is represented as:
	{ \begin{equation}
		\begin{aligned}
		&\begin{bmatrix}
		{\mathbf{R}_{Ci}}&{\mathbf{t}_{Ci}}\\
		{{\mathbf{0}}}&{1}\\
		\end{bmatrix} = \begin{bmatrix}{\mathbf{R}_{i}}&{\mathbf{t}_{i}}\\
		{{\mathbf{0}}}&{1}\\
		\end{bmatrix}^{-1}\begin{bmatrix}{\mathbf{R}}&{\mathbf{t}}\\
		{{\mathbf{0}}}&{1}\\
		\end{bmatrix}\begin{bmatrix}{\mathbf{R}_{i}}&{\mathbf{t}_{i}}\\
		{{\mathbf{0}}}&{1}\\
		\end{bmatrix} \\
		& \qquad \ \ \ =\begin{bmatrix}{{\mathbf{R}_{i}^T}{\mathbf{R}}{\mathbf{R}_{i}}}& \ {{\mathbf{R}_{i}^T}{\mathbf{R}}{\mathbf{t}_{i}}+{\mathbf{R}_{i}^T}{\mathbf{t}}-{\mathbf{R}_{i}^T}{\mathbf{t}_{i}}}\\
		{{\mathbf{0}}}& \ {1}\\
		\end{bmatrix}.
		\end{aligned}
		\label{eq:transformationmatrix6dof}
		\end{equation}}\\
	Essential matrix $\mathbf{E}$ of two frames of camera $C_i$ is given as:
	\begin{equation}
	\begin{aligned}
	\mathbf{E} = [\mathbf{t}_{Ci}]_{\times}\mathbf{R}_{Ci}
	= {\mathbf{R}_{i}^T}[{\mathbf{R}_{i}}\mathbf{t}_{Ci}]_{\times}{{\mathbf{R}}{\mathbf{R}_{i}}},
	\end{aligned}
	\label{eq:E6dof}
	\end{equation}
	where $\left[{\mathbf{R}_{i}}\mathbf{t}_{Ci}\right]_{\times}={\mathbf{R}}[{\mathbf{t}_{i}}]_{\times}{{\mathbf{R}}^T} + [{\mathbf{t}}]_{\times} - [{\mathbf{t}_{i}}]_{\times}$. The relationship of essential matrix $\mathbf{E}$ and local affine transformation $\mathbf{A}$ is formulated as follows~\cite{barath2018efficient}:
	\vspace{-1pt} 
	\begin{equation}
	(\mathbf{E}^{T}{\mathbf{x}}'_{ij})_{(1:2)} = -(\hat{\mathbf{A}}^{T}\mathbf{E}{\mathbf{x}}_{ij})_{(1:2)},
	\vspace{-1pt} 	
	\label{eq:E6dof_Ac1}
	\end{equation}
	where $\mathbf{n}_{ij}\triangleq{\mathbf{E}^{T}{\mathbf{x}}'_{ij}}$ and $\mathbf{n}'_{ij}\triangleq{\mathbf{E}{\mathbf{x}}_{ij}}$ denote the epipolar lines in their implicit form in frames of camera $C_i$ at times $k$ and $k+1$. The subscript 1 and 2 represent the first and second equations of the equation system, respectively. $\hat{\mathbf{A}}$ is a $3\times3$ matrix: $\hat{\mathbf{A}} = [\mathbf{A} \ \mathbf{0}; \mathbf{0} \ 0]$. By substituting Eq.~\eqref{eq:E6dof} into Eq.~\eqref{eq:E6dof_Ac1}, we obtain:
	\vspace{-1pt} 
	\begin{eqnarray}
	\begin{aligned}		
	({\mathbf{R}_{i}^T}{\mathbf{R}^T}&{[{\mathbf{R}_{i}}\mathbf{t}_{Ci}]_{\times}^T}{\mathbf{R}_{i}}{\mathbf{x}}'_{ij})_{(1:2)} \\
	&= -(\hat{\mathbf{A}}^{T}{\mathbf{R}_{i}^T}[{\mathbf{R}_{i}}\mathbf{t}_{Ci}]_{\times}{{\mathbf{R}}{\mathbf{R}_{i}}}{\mathbf{x}}_{ij})_{(1:2)}.
	\end{aligned}
	\vspace{-1pt} 
	\label{eq:E6dof_Ac2}
	\end{eqnarray}
	Based on Eq.~\eqref{eq:imagecoord6dof}, the above equation is reformulated and expanded as follows:
	\vspace{-1pt} 
	\begin{equation}
	\begin{aligned}		
	({\mathbf{R}_{i}^T}&([{\mathbf{t}_{i}}]_{\times}{\mathbf{R}}^T  +   {\mathbf{R}^T}[{\mathbf{t}}]_{\times} -  {\mathbf{R}^T}[{\mathbf{t}_{i}}]_{\times}){\mathbf{p}}'_{ij})_{(1:2)} = \\
	&(\hat{\mathbf{A}}^{T}{\mathbf{R}_{i}^T}({\mathbf{R}}[{\mathbf{t}_{i}}]_{\times} + [{\mathbf{t}}]_{\times}{\mathbf{R}} - [{\mathbf{t}_{i}}]_{\times}{\mathbf{R}}){\mathbf{p}}_{ij})_{(1:2)}.
	\end{aligned}
	\vspace{-1pt} 
	\label{eq:E6dof_Ac6}
	\end{equation}
	
	Eq.~\eqref{eq:E6dof_Ac6} interprets the new epipolar constraints which a local affine transformation implies on the $i$-th camera from a multi-camera system between two frames $k$ and $k+1$.
	
	For each AC $({\mathbf{x}}_{ij}, {\mathbf{x}}'_{ij}, \mathbf{A})$, we get three polynomials based on Eqs.~\eqref{GECS6dof} and~\eqref{eq:E6dof_Ac6}, see supplementary material for details. Motivated by scenarios like self-driving cars, ground robots or AR headsets, we investigate relevant special cases of multi-camera motion, \emph{i.e.}, planar motion and motion with known vertical direction, see Fig.~\ref{fig:Specialcases}. We show that two special cases can be efficiently solved with ACs. 
	
	\section{\label{sec:planarmotion}Relative Pose under Planar Motion}
	\vspace{-2pt} 
	\begin{figure}[ht]
		\begin{center}	
			\subfigure[Planar motion]
			{
				\includegraphics[width=0.31\linewidth]{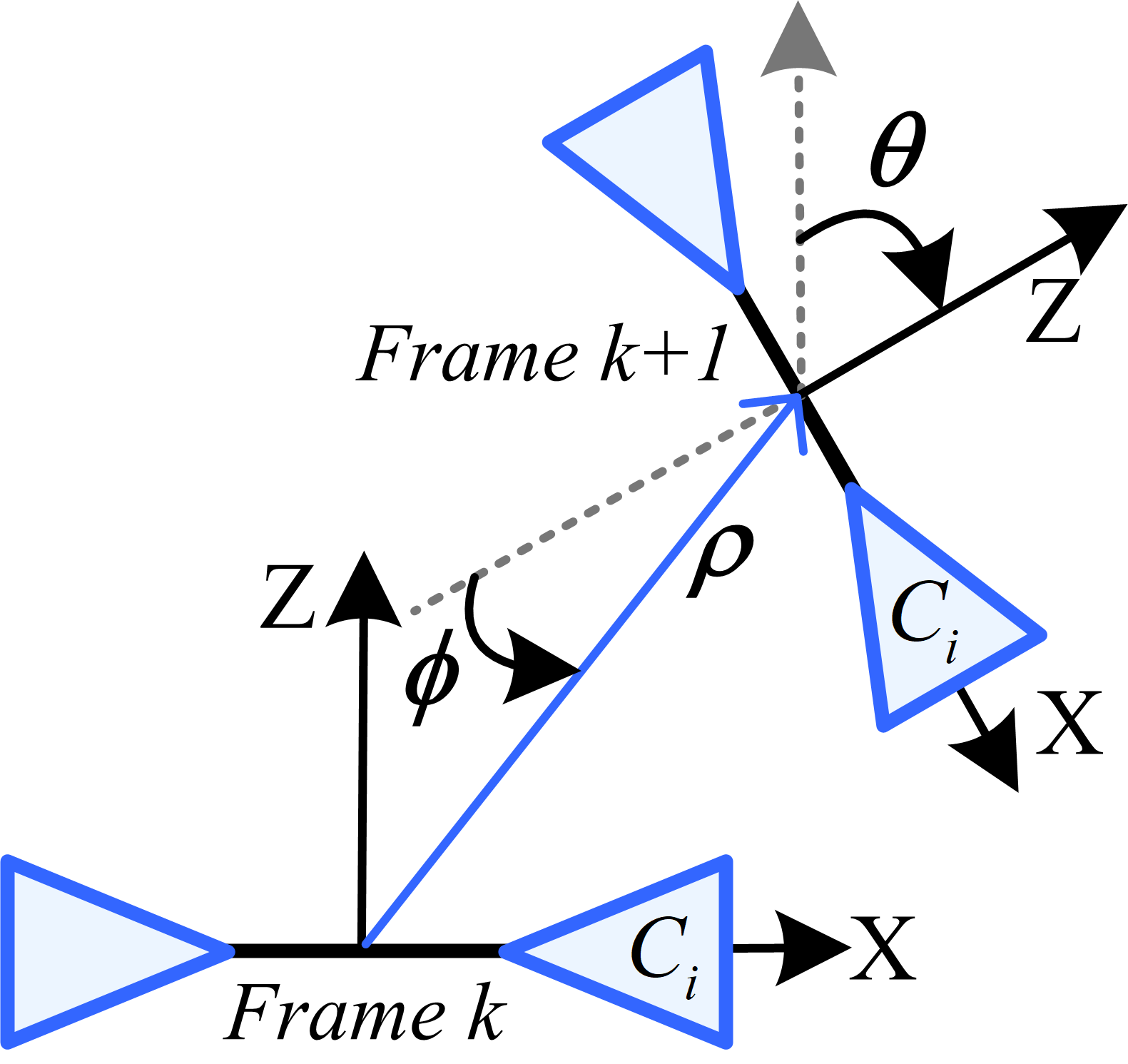}
			}
			\hspace{0.2in}
			\subfigure[Motion with known vertical direction]
			{
				\includegraphics[width=0.55\linewidth]{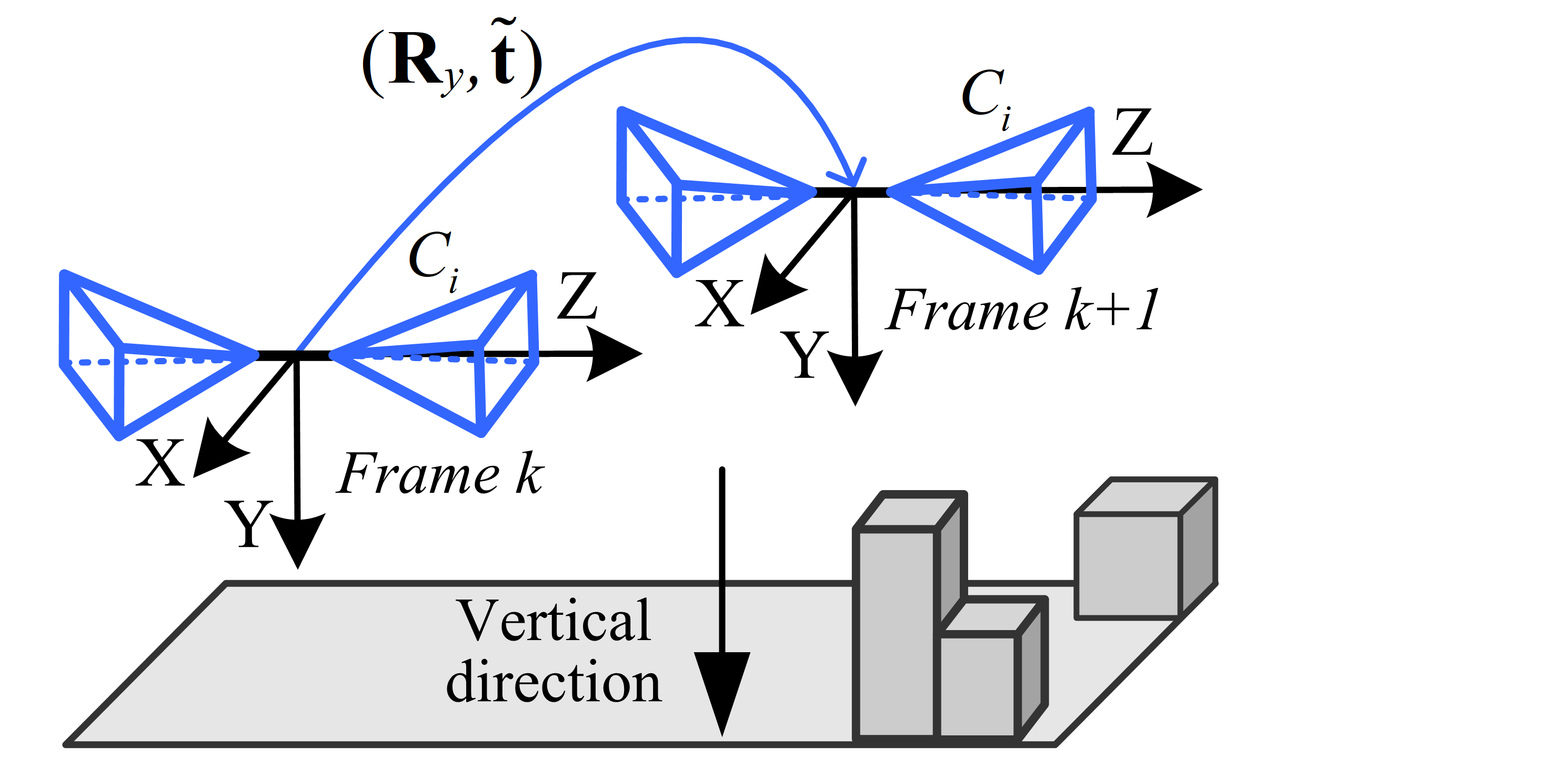}
			}
		\end{center}
		\vspace{-5pt} 
		\caption{Special cases of multi-camera motion: (a) Planar motion in top-view. There are three unknowns: yaw angle $\theta$, translation direction $\phi$ and translation distance $\rho$. (b) Motion with known vertical direction. There are four unknowns: a Y-axis rotation $\mathbf{R}_{y}$ and 3D translation $\tilde{\mathbf{t}} =[{\tilde{t}_x}, {\tilde{t}_y}, {\tilde{t}_z}]^T$.}
		\label{fig:Specialcases}
	\end{figure} 
	
	When assuming that the body, to which the camera system is rigidly fixed, moves on a planar surface (as visualized in Fig.~\ref{fig:Specialcases}(a)), there are only a Y-axis rotation and 2D translation between the reference frames $k$ and $k+1$. Similar to Eqs.~\eqref{eq:R6dof1} and~\eqref{eq:T6dof1}, the rotation $\mathbf{R}=\mathbf{R}_{y}$ and the translation $\mathbf{t}$ from frame $k$ to $k+1$ is written as: 
	\begin{equation}
	\begin{aligned}	
	\mathbf{R}_{y} & = \frac{1}{1+{q_y^2}}\begin{bmatrix}{1-{q_y^2}}&0&{-2{q_y}}\\
	0&1+{q_y^2}&0\\
	{2{q_y}}&0&{1-{q_y^2}}
	\end{bmatrix}, \\
	\mathbf{t} & = \begin{bmatrix}
	{t_x}& \
	{0}& \
	{t_z}
	\end{bmatrix}^T.
	\end{aligned}	
	\label{eq:Ryt1}
	\end{equation}
	where ${q_y}=\tan(\frac{\theta}{2})$, $t_x={\rho\sin{(\phi)}}$, $t_z={-\rho\cos{(\phi)}}$, $\rho$ is the distance between two multi-camera reference frames.  
	
	\subsection{Solution by Reduction to a Single Polynomial}
	By substituting Eq.~\eqref{eq:Ryt1} into Eqs.~\eqref{GECS6dof} and~\eqref{eq:E6dof_Ac6}, we get an equation system of three polynomials for three unknowns $q_y$, $t_x$ and $t_z$.
	Since an AC generally provides three independent constraints for relative pose, a single AC is sufficient to recover the planar motion of a multi-camera system. After separating $q_y$ from $t_x$, $t_z$, the three independent constraints from an AC form matrix equation:
	\begin{equation} 
	\frac{1}{1+{q_y^2}}\underbrace {\begin{bmatrix}
		{M_{11}}& {M_{12}}& {M_{13}}\\
		{M_{21}}& {M_{22}}& {M_{23}}\\
		{M_{31}}& {M_{32}}& {M_{33}}
		\end{bmatrix}}_{{\mathbf{M}}\left( {{q_y}} \right)}
	\begin{bmatrix}
	{{{t}_x}}\\
	{{{t}_z}}\\
	1
	\end{bmatrix} = {\mathbf{0}},
	\label{eq:euq_q1}
	\end{equation}
	where $M_{ij}$ $(i,j \in [1, 3])$ is an element of coefficient matrix ${\mathbf{M}(q_y)}$ and are formed by the polynomial coefficients and one unknown variable $q_y$, see supplementary material for details. Since ${\mathbf{M}(q_y)}$ is a square matrix, Eq.~\eqref{eq:euq_q1} has a non-trivial solution only if the determinant of ${\mathbf{M}(q_y)/(1+{q_y^2})}$ is zero. The expansion of $\det({\mathbf{M}(q_y)}/(1+{q_y^2}))=0$ gives a 4-degree univariate polynomial as follows:
	\begin{eqnarray}
	\begin{aligned}		
	\quot(\textstyle \sum_{i=0}^6 w_i q_y^i, {q_y^2}+1) = 0,
	\end{aligned}
	\label{eq:euq_q2}
	\end{eqnarray}
	where $\quot(a, b)$ means calculating the quotient of $a$ divided by $b$, $w_{0},\ldots,w_{6}$ are formed by a Pl\"{u}cker line correspondence and an affine transformation between the corresponding feature points. 
	
	Note that the coefficients are divided by ${q_y^2}+1$, which reduces the polynomial degree and improves the efficiency of the solution. The univariate polynomial Eq.~\eqref{eq:euq_q2} leads to an explicit analytic solution with a maximum of 4 real roots. Once the solutions for $q_y$ are found, the remaining unknowns $t_x$ and $t_z$ are solved by substituting $q_y$ into ${\mathbf{M}(q_y)}$ and solving the linear system via calculating its null vector. Finally, the rotation matrix $\mathbf{R}_{y}$ is recovered from Eq.~\eqref{eq:Ryt1}. 
	
	However, we proved that the solver using a single AC has a degenerate case, \emph{i.e.}, the distances between the motion plane and optical centers of the cameras being equal, see supplementary material for details. This degenerate case might happen in the self-driving scenario, which would lead to that both the translation direction and the translation scale cannot be calculated using one AC. To overcome this issue, two ACs are used to estimate the relative pose. For example, the first and second constraints of the first AC, and the first constraint of the second AC are also stacked into three equations in three unknowns, just as Eq.~\eqref{eq:euq_q1}. The solution procedure remains the same, except that the code for constructing the coefficient matrix ${\mathbf{M}(q_y)}$ is replaced.      
	
	\section{\label{sec:knownverticaldirection}Relative Pose with Known Vertical Direction}
	In this section a minimal solution using two ACs is proposed for relative motion estimation for multi-camera systems with known vertical direction, see Fig.~\ref{fig:Specialcases}(b). In this case, an IMU is coupled with the multi-camera system and the relative rotation between the IMU and the reference frame is known. The IMU provides the known roll and pitch angles for the reference frame. So the reference frame can be aligned with the measured vertical direction, such that the X-Z-plane of the aligned reference frame is parallel to the ground plane and the Y-axis is parallel to the vertical direction. Rotation $\mathbf{R}_{\text{\text{imu}}}$ for aligning the reference frame to the aligned reference frame is written as:  
	\begin{equation}
	\begin{aligned}
	&\mathbf{R}_{\text{\text{imu}}} = \mathbf{R}_{p}\mathbf{R}_{r} \\
	&= \begin{bmatrix}1&0&0\\
	0&\cos(\theta_p)&{\sin(\theta_p)}\\
	0&{-\sin(\theta_p)}&{\cos(\theta_p)}
	\end{bmatrix}\begin{bmatrix}
	{\cos(\theta_r)}&{\sin(\theta_r)}&0\\
	{ -\sin(\theta_r)}&{\cos(\theta_r)}&0\\
	0&0&1
	\end{bmatrix}, \nonumber
	\end{aligned}
	\label{eq:RxRz}
	\end{equation}
	where $\theta_r$ and $\theta_p$ are roll and pitch angles provided by the coupled IMU, respectively. Thus, there are only a Y-axis rotation $\mathbf{R}=\mathbf{R}_{y}$ and 3D translation $\tilde{\mathbf{t}}= {{{\mathbf{R}}}'_{\text{imu}}}{\mathbf{t}} =[{\tilde{t}_x}, {\tilde{t}_y}, {\tilde{t}_z}]^T$ to be estimated between the aligned multi-camera reference frames at time $k$ and $k+1$. In this section, we show that the geometric constraints in Section~\ref{sec:6DOFmotion} can be generalized to the multi-camera motion with known vertical direction.   
	
	\subsection{Generalized Camera Model}
	Let us denote the rotation matrices from the roll and pitch angles of the two corresponding multi-camera reference frames
	at time $k$ and $k+1$ as $\mathbf{R}_{\text{\text{imu}}}$ and $\mathbf{R}'_{\text{\text{imu}}}$. The relative rotation between two reference frames is 
	\begin{equation} 
	{\mathbf{R}} = {(\mathbf{R}'_{\text{\text{imu}}})^T}{\mathbf{R}_{y}}{\mathbf{R}_{\text{\text{imu}}}}.
	\label{eq:Rv}
	\end{equation}
	We substitute Eq.~\eqref{eq:Rv} into Eq.~\eqref{GECS6dof} yields:
	{{\begin{equation}
			\begin{aligned}
			{\underbrace {\left(\begin{bmatrix}
					{{{{\mathbf{R}}}'_{\text{\text{imu}}}}}& {\mathbf{0}}\\
					{{\mathbf{0}}}& {{{{\mathbf{R}}}'_{\text{\text{imu}}}}}\\
					\end{bmatrix}{\mathbf{l}'_{ij}} \right)^T}_{\tilde{\mathbf{l}}'_{ij}}}
			&\begin{bmatrix}{{{\left[ {{\tilde{\mathbf{t}}}} \right]}_ \times } {{\mathbf{R}}_y}}&{{{\mathbf{R}}_y}}\\
			{{{\mathbf{R}}_y}}&{\mathbf{0}}
			\end{bmatrix}\\
			\vspace{-5pt} 
			&{\underbrace {\left(\begin{bmatrix}
					{{{\mathbf{R}}_{\text{\text{imu}}}}}& {\mathbf{0}}\\
					{{\mathbf{0}}}& {{{\mathbf{R}}_{\text{\text{imu}}}}}\\
					\end{bmatrix}{\mathbf{l}_{ij}} \right)}_{\tilde{\mathbf{l}}_{ij}}}= 0,
			\end{aligned}
			\label{eq:GECSIMU}
			\end{equation}}}\\
	\noindent 
	where ${\tilde{\mathbf{l}}_{ij}} \leftrightarrow {\tilde{\mathbf{l}}'_{ij}}$ are the corresponding Pl\"{u}cker lines expressed in the aligned multi-camera reference frame.
	
	\subsection{Affine Transformation Constraint}
	In this case, the transition matrix of the camera coordinate system $C_i$ between consecutive frames $k$ and $k+1$ is represented as follows:
	{\begin{equation}
		\begin{aligned}
		&\begin{bmatrix}
		{\mathbf{R}_{Ci}}&\ {\mathbf{t}_{Ci}}\\
		{{\mathbf{0}}}&\ {1}\\
		\end{bmatrix} 
		= \left(\begin{bmatrix}{\mathbf{R}'_{\text{imu}}}&{\mathbf{0}}\\
		{{\mathbf{0}}}&{1}\\
		\end{bmatrix}\begin{bmatrix}{\mathbf{R}_{i}}&{\mathbf{t}_{i}}\\
		{{\mathbf{0}}}&{1}\\
		\end{bmatrix}\right)^{-1}\\
		& \qquad \qquad \quad \ \ \begin{bmatrix}{\mathbf{R}_{y}}&{\tilde{\mathbf{t}}}\\
		{{\mathbf{0}}}&{1}
		\end{bmatrix}
		\left(\begin{bmatrix}{\mathbf{R}_{\text{imu}}}&{\mathbf{0}}\\
		{{\mathbf{0}}}&{1}\\
		\end{bmatrix}\begin{bmatrix}{\mathbf{R}_{i}}&{\mathbf{t}_{i}}\\
		{{\mathbf{0}}}&{1}\\
		\end{bmatrix}\right).
		\end{aligned}
		\label{eq:transformationmatrix_Ev}
		\end{equation}}\\	
	we denote that
	{\begin{eqnarray}
		\begin{aligned}		
		&\begin{bmatrix}
		{\tilde{\mathbf{R}}_{\text{imu}}}&{\tilde{\mathbf{t}}_{\text{imu}}}\\
		{{\mathbf{0}}}&{1}\\
		\end{bmatrix} = \begin{bmatrix}{\mathbf{R}_{\text{imu}}}&{\mathbf{0}}\\
		{{\mathbf{0}}}&{1}\\
		\end{bmatrix}\begin{bmatrix}{\mathbf{R}_{i}}&{\mathbf{t}_{i}}\\
		{{\mathbf{0}}}&{1}\\
		\end{bmatrix},\\	
		&\begin{bmatrix}
		{\tilde{\mathbf{R}}'_{\text{imu}}}&{\tilde{\mathbf{t}}'_{\text{imu}}}\\
		{{\mathbf{0}}}&{1}\\
		\end{bmatrix} = \begin{bmatrix}{\mathbf{R}'_{\text{imu}}}&{\mathbf{0}}\\
		{{\mathbf{0}}}&{1}\\
		\end{bmatrix}\begin{bmatrix}{\mathbf{R}_{i}}&{\mathbf{t}_{i}}\\
		{{\mathbf{0}}}&{1}\\
		\end{bmatrix}.
		\end{aligned}
		\label{eq:R_imuNew}
		\end{eqnarray}}
	
	\noindent 
	By substituting Eq.~\eqref{eq:R_imuNew} into Eq.~\eqref{eq:transformationmatrix_Ev}, we obtain
	{\begin{equation}
		\small
		\begin{aligned}
		&\begin{bmatrix}
		{\mathbf{R}_{Ci}}&{\mathbf{t}_{Ci}}\\
		{{\mathbf{0}}}&{1}\\
		\end{bmatrix}\\
		&=\begin{bmatrix}{({\tilde{\mathbf{R}}'_{\text{imu}}})^T{\mathbf{R}_{y}}{\tilde{\mathbf{R}}_{\text{imu}}}}& {{({\tilde{\mathbf{R}}'_{\text{imu}}})^T}({\mathbf{R}_{y}}{\tilde{\mathbf{t}}_{\text{imu}}}+{\tilde{\mathbf{t}}}-{\tilde{\mathbf{t}}'_{\text{imu}}})}\\
		{{\mathbf{0}}}&{1}\\
		\end{bmatrix}.
		\end{aligned}
		\label{eq:transformationmatrix_Ev2}
		\end{equation}}
	
	\noindent 
	Essential matrix $\mathbf{E}$ between the two frames is given as
	\begin{equation}
	\begin{aligned}
	\mathbf{E} = [\mathbf{t}_{Ci}]_{\times}\mathbf{R}_{Ci} = {({\tilde{\mathbf{R}}'_{\text{imu}}})^T}[{\tilde{\mathbf{R}}'_{\text{imu}}}\mathbf{t}_{Ci}]_{\times}{{\mathbf{R}_{y}}{\tilde{\mathbf{R}}_{\text{imu}}}},
	\end{aligned}
	\label{eq:Ev}
	\end{equation}
	where $[{\tilde{\mathbf{R}}'_{\text{imu}}}\mathbf{t}_{Ci}]_{\times}={{\mathbf{R}_{y}}[\tilde{\mathbf{t}}_{\text{imu}}]_{\times}{\mathbf{R}_{y}^T}} + [\tilde{\mathbf{t}}]_{\times} - [\tilde{\mathbf{t}}'_{\text{imu}}]_{\times}$. By substituting Eq.~\eqref{eq:Ev} into Eq.~\eqref{eq:E6dof_Ac1}, we obtain
	{\begin{eqnarray}
		\begin{aligned}		
		({\tilde{\mathbf{R}}_{\text{imu}}^T}&{\mathbf{R}_{y}^T}{[{\tilde{\mathbf{R}}'_{\text{imu}}}\mathbf{t}_{Ci}]_{\times}^T}{{\tilde{\mathbf{R}}'_{\text{imu}}}}{\mathbf{x}}'_{ij})_{(1:2)} = 	\\ &-(\hat{\mathbf{A}}^{T}{({\tilde{\mathbf{R}}'_{\text{imu}}})^T}[{\tilde{\mathbf{R}}'_{\text{imu}}}\mathbf{t}_{Ci}]_{\times}{{\mathbf{R}_{y}}{\tilde{\mathbf{R}}_{\text{imu}}}}{\mathbf{x}}_{ij})_{(1:2)}.
		\end{aligned}
		\label{eq:Ev_Ac}
	\end{eqnarray}}
	
	\noindent We denote the normalized homogeneous image coordinates expressed in the aligned multi-camera reference frame as  $(\tilde{{\mathbf{p}}}_{ij}, {\tilde{\mathbf{p}}}'_{ij})$, which are given as
	\begin{equation}
	\tilde{{\mathbf{p}}}_{ij} = {\tilde{\mathbf{R}}_{\text{imu}}}{\mathbf{x}}_{ij},\qquad
	\tilde{{\mathbf{p}}}'_{ij} = {{\tilde{\mathbf{R}}'_{\text{imu}}}}{\mathbf{x}}'_{ij}.
	\label{eq:Ev_alignedimage}
	\end{equation}
	Based on the above equation, Eq.~\eqref{eq:Ev_Ac} is rewritten as
	\begin{equation}
	\begin{aligned}		
	&({\tilde{\mathbf{R}}_{\text{imu}}^T}([{\tilde{\mathbf{t}}_{\text{imu}}}]_{\times}{\mathbf{R}_{y}^T}  +   {\mathbf{R}_{y}^T}[{\tilde{\mathbf{t}}}]_{\times} - {\mathbf{R}_{y}^T}[{\tilde{\mathbf{t}}'_{\text{imu}}}]_{\times}){\tilde{{\mathbf{p}}}'_{ij}})_{(1:2)} = \\
	&(\hat{\mathbf{A}}^{T}{({\tilde{\mathbf{R}}'_{\text{imu}}})^T}({\mathbf{R}_{y}}[{\tilde{\mathbf{t}}_{\text{imu}}}]_{\times} + [{\tilde{\mathbf{t}}}]_{\times}{\mathbf{R}_{y}} - [{\tilde{\mathbf{t}}_{\text{imu}}}]_{\times}{\mathbf{R}_{y}}){{\tilde{{\mathbf{p}}}}_{ij}})_{(1:2)}
	\end{aligned}
	\label{eq:Ev_Ac2}
	\end{equation}
	
	\subsection{Solution by Reduction to a Single Polynomial}
	Based on Eqs.~\eqref{eq:GECSIMU} and \eqref{eq:Ev_Ac2}, we get an equation system of three polynomials for four unknowns $q_y$, $\tilde{t}_x$, $\tilde{t}_y$ and $\tilde{t}_z$. Recall that there are three independent constraints provided by one AC. Thus, one more equation is required which can be taken from a second AC. In principle, one arbitrary equation can be chosen from Eqs.~\eqref{eq:GECSIMU} and \eqref{eq:Ev_Ac2}, for example, three constraints of the first AC, and the first constraint of the second AC are stacked into 4 equations in 4 unknowns:
	\begin{equation} 
	\frac{1}{1+{q_y^2}}\underbrace{\begin{bmatrix}
		{\tilde{M}_{11}}&{\tilde{M}_{12}}&{\tilde{M}_{13}}&{\tilde{M}_{14}}\\
		{\tilde{M}_{21}}&{\tilde{M}_{22}}&{\tilde{M}_{23}}&{\tilde{M}_{24}}\\
		{\tilde{M}_{31}}&{\tilde{M}_{32}}&{\tilde{M}_{33}}&{\tilde{M}_{34}}\\
		{\tilde{M}_{41}}&{\tilde{M}_{42}}&{\tilde{M}_{43}}&{\tilde{M}_{44}}
		\end{bmatrix} }_{\tilde{\mathbf{M}}\left( {q_y} \right)}
	\begin{bmatrix}
	{{\tilde{t}_x}}\\
	{{\tilde{t}_y}}\\
	{{\tilde{t}_z}}\\
	1
	\end{bmatrix}  = {\mathbf{0}},
	\label{eq:euq_Ev1}
	\end{equation}
	where the elements $\tilde{M}_{ij} (i=1,\ldots,4; j=1,\ldots,4)$ of the coefficient matrix $\tilde{\mathbf{M}}({q_y})$ are formed by the polynomial coefficients and one unknown variable $q_y$, see supplementary material for details. Since $\tilde{\mathbf{M}}({q_y})/(1+{q_y^2})$ is a square matrix, Eq.~\eqref{eq:euq_Ev1} has a non-trivial solution only if the $\det (\tilde{\mathbf{M}}({q_y})/(1+{q_y^2})) = 0$. The expansion of the determinant equation gives a 6-degree univariate polynomial:
	{\begin{equation}
		\begin{aligned}	  	
		\quot(\textstyle \sum_{i=0}^8 w_i q_y^i, {q_y^2}+1) = 0,
		\end{aligned}
		\label{eq:euq_Evq}
	\end{equation}}
	
	\vspace{-12pt}
	\noindent 
	where $\tilde{w}_{0},\ldots,\tilde{w}_{8}$ are formed by two Pl\"{u}cker line correspondences and two affine transformations between the corresponding feature points.
	
	This univariate polynomial leads to a maximum of 6 solutions. Equation~\eqref{eq:euq_Evq} can be efficiently solved by the companion matrix method~\cite{cox2013ideals} or Sturm bracketing method~\cite{nister2004efficient}. Once $q_y$ has been obtained, the rotation matrix $\mathbf{R}_{y}$ is recovered from Eq.~\eqref{eq:Ryt1}.
	For the relative pose between two multi-camera reference frames at time $k$ and $k+1$, the rotation matrix $\mathbf{R}$ is recovered from Eq.~\eqref{eq:Rv} and the translation is computed by $\mathbf{t} =  ({{{\mathbf{R}}}'_{\text{imu}}})^T \mathbf{\tilde{t}}$.
	
	\section{\label{sec:experiments}Experiments}
	In this section, we conduct extensive experiments on both synthetic and real-world data to evaluate the performance of the proposed methods. Our solvers are compared with state-of-the-art techniques.
	
	For relative pose estimation under planar motion, the solvers using one AC and two ACs proposed in Section~\ref{sec:planarmotion} are referred to as \texttt{1AC~plane} method and \texttt{2AC~plane} method, respectively. The accuracy of \texttt{1AC~plane} and \texttt{2AC~plane} are compared with the methods \texttt{17pt-Li}~\cite{li2008linear}, \texttt{8pt-Kneip}~\cite{kneip2014efficient}, \texttt{6pt-Stew{\'e}nius}~\cite{henrikstewenius2005solutions} and \texttt{6AC-Ventura}~\cite{alyousefi2020multi}.
	
	For relative pose estimation with known vertical direction, the solver proposed in Section~\ref{sec:knownverticaldirection} is referred to as \texttt{2AC vertical} method. We compare the accuracy of \texttt{2AC vertical} with the methods \texttt{17pt-Li}~\cite{li2008linear}, \texttt{8pt-Kneip}~\cite{kneip2014efficient}, \texttt{6pt-Stew{\'e}nius}~\cite{henrikstewenius2005solutions},  \texttt{4pt-Lee}~\cite{hee2014relative}, \texttt{4pt-Sweeney}~\cite{sweeney2014solving}, \texttt{4pt-Liu}~\cite{liu2017robust} and \texttt{6AC-Ventura}~\cite{alyousefi2020multi}.
	
	A single run of the proposed solvers \texttt{1AC~plane}, \texttt{2AC~plane} and \texttt{2AC vertical} take 3.6, 3.6 and 17.8~$\mu s$ in C++, respectively. Due to space limitations, the efficiency comparison and stability study are provided in the supplementary material. In the experiments, all the solvers are integrated within RANSAC to reject outliers. For the point-based solvers, only the point coordinates of ACs are used. The relative pose which produces the highest number of inliers is chosen. The confidence of RANSAC is set to 0.99 and an inlier threshold angle is set to $0.1^\circ$ by following the definition in OpenGV~\cite{kneip2014opengv}. We also show the feasibility of our methods on the \texttt{KITTI} dataset~\cite{geiger2013vision}. This experiment demonstrates that our methods are well suited for visual odometry in road driving scenarios.     
	
	\subsection{Experiments on Synthetic Data}
	We made a simulated 2-camera rig system by following the KITTI autonomous driving platform. The baseline length between two simulated cameras is set to 1 meter and the cameras are installed at different altitude. The multi-camera reference frame is set at the center of the camera rig and the translation between two multi-camera reference frames is 3 meters. The resolution of the cameras is 640 $\times$ 480 pixels and the focal lengths are 400 pixels. The principal points are set to the image center (320, 240).
	
	The synthetic scene is composed of a ground plane and 50 random planes. All 3D planes are randomly generated within the range of -5 to 5 meters (along axes X and Y), and 10 to 20 meters (Z-axis direction), that are expressed in the respective axis of the multi-camera reference frame. We choose 50 ACs from the ground plane and an AC from each random plane randomly, thus, having a total of 100 ACs. For each AC, a random 3D point from a plane is reprojected onto two cameras to get the image point pair. The corresponding affine transformation is obtained by the following procedure. First, four points are chosen as the vertices of a square in view~1, where the center of the square is the point coordinates of AC. The side length of the square is set as $20$ or $40$~pixels. A larger side length causes smaller noise of affine transformation. Second, the four corresponding points in view~2 are calculated by the ground truth homography. Third, four sampled point pairs are contaminated by Gaussian noise, which is similar to the noise added to the coordinates of image point pair. Fourth, the noisy homography matrix is estimated using the four sampled point pairs. The noisy affine transformation is the first-order approximation of the noisy homography matrix. This procedure enables an indirect but geometrically interpretable way of adding noise to the affine transformation~\cite{barath2019homography}. 
	
	A total of 1000 trials are carried out in the synthetic experiment. In each test, 100 ACs are generated randomly. The ACs for the methods are selected randomly and the error is measured on the relative pose which produces the most inliers within the RANSAC scheme. This also allows us to select the best candidate from multiple solutions by counting their inliers in a RANSAC-like procedure. The median of errors are used to assess the rotation and translation error. The rotation error is computed as the angular difference between the ground truth rotation and the estimated rotation: ${\varepsilon _{\bf{R}}} = \arccos ((\trace({\mathbf{R}_{gt}}{{\mathbf{R}^T}}) - 1)/2)$, where $\mathbf{R}_{gt}$ and ${\mathbf{R}}$ are the ground truth and estimated rotation matrices. Following the definition in~\cite{quan1999linear,hee2014relative}, the translation error is defined as: ${\varepsilon _{\bf{t}}} = 2\left\| ({{\mathbf{t}_{gt}}}-{\mathbf{t}})\right\|/(\left\| {\mathbf{t}_{gt}} \right\| + \left\| {{\mathbf{t}}} \right\|)$, where $\mathbf{t}_{gt}$ and ${\mathbf{t}}$ are the ground truth and estimated translations.    
	\subsubsection{Planar Motion Estimation}
	\vspace{1pt} 
	\begin{figure}[tbp]
		\begin{center}
			\includegraphics[width=0.9\linewidth]{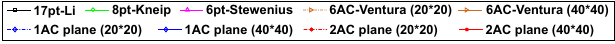}\\
			\subfigure[\scriptsize{${\varepsilon_{\bf{R}}}$ with image noise}]
			{
				\includegraphics[width=0.303\linewidth]{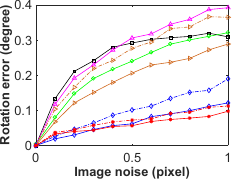}
			}
			\vspace{-1pt} 
			\subfigure[\scriptsize{${\varepsilon_{\bf{t}}}$ with image noise}]
			{
				\includegraphics[width=0.303\linewidth]{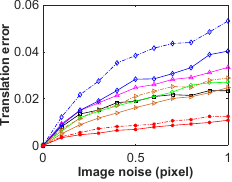}
			}
			\subfigure[\scriptsize{Translation direction error with image noise}]
			{
				\includegraphics[width=0.303\linewidth]{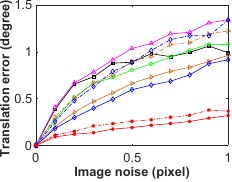}
			}
			\subfigure[\scriptsize{${\varepsilon_{\bf{R}}}$ with planar motion noise}]
			{
				\includegraphics[width=0.303\linewidth]{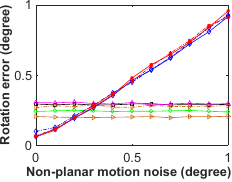}
			}
			\vspace{-1pt}
			\subfigure[\scriptsize{${\varepsilon_{\bf{t}}}$ with planar motion noise}]
			{
				\includegraphics[width=0.303\linewidth]{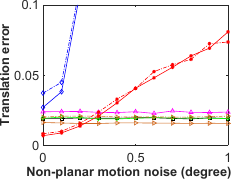}
			}
			\subfigure[\scriptsize{Translation direction error with planar motion noise}]
			{
				\includegraphics[width=0.303\linewidth]{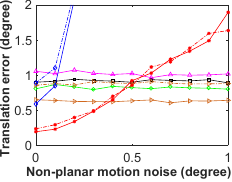}
			}
		\end{center}
		\caption{Rotation and translation error under planar motion. (a--c): varying image noise under perfect planar motion. (d--f): varying planar motion noise and fixed $1.0$ pixel std.\ image noise.}
		\vspace{-2pt} 
		\label{fig:RT_planar}
	\end{figure}
	In this scenario, the planar motion of the multi-camera system is described by ($\theta$, $\phi$), see Fig.~\ref{fig:Specialcases}(a). The magnitudes of both angles ranges from $-10^\circ$ to $10^\circ$. 
	We use Gaussian image noise with a standard deviation ranging from $0$ to $1$ pixel. Fig.~\ref{fig:RT_planar}(a--c) shows the performance of the proposed \texttt{1AC~plane} and \texttt{2AC~plane} methods against image noise. Since the noise magnitude of affine transformation is influenced by the support region of sampled points, the AC-based methods have better performance with larger support region at the same magnitude of image noise. It can be seen that \texttt{2AC~plane} performs better than the other compared methods under perfect planar motion, even though the size of the square is $20$~pixels. The \texttt{1AC~plane} method performs better than the PC-based methods and the \texttt{6AC-Ventura} method in rotation estimation, but has worse performance in translation estimation. As shown in Fig.~\ref{fig:RT_planar}(c), we plot the translation direction error as an additional evaluation. It is interesting to see that when the side length of the square is $40$~pixels, the \texttt{1AC~plane} method performs better than the PC-based methods and the \texttt{6AC-Ventura} method in translation direction estimation. 
	\begin{table*}[tbp]
		\begin{center}
			\setlength{\tabcolsep}{0.9mm}{
				\scalebox{0.85}{
					\begin{tabular}{|c||c|c|c|c|c|c|c|c|c|}
						\hline
						\multirow{2}{*}{\small{Sequence}} &  \footnotesize{17pt-Li~\cite{li2008linear}} & \footnotesize{8pt-Kneip~\cite{kneip2014efficient}}  &  \footnotesize{6pt-}{\footnotesize{St.}}~\footnotesize{\cite{henrikstewenius2005solutions}}   &  \footnotesize{4pt-Lee~\cite{hee2014relative}} & \footnotesize{4pt-Sw.~\cite{sweeney2014solving}}& \footnotesize{4pt-Liu~\cite{liu2017robust}}& \footnotesize{6AC-Ven.~\cite{alyousefi2020multi}}& \footnotesize{\textbf{2AC~plane}}& \footnotesize{\textbf{2AC~vertical}} \\
						\cline{2-10}
						& ${\varepsilon _{\bf{R}}}$\qquad\ ${\varepsilon _{\bf{t}}}$      &  ${\varepsilon _{\bf{R}}}$\qquad\ ${\varepsilon _{\bf{t}}}$      &   ${\varepsilon _{\bf{R}}}$\qquad\ ${\varepsilon _{\bf{t}}}$     &   ${\varepsilon _{\bf{R}}}$\qquad\ ${\varepsilon _{\bf{t}}}$  &   ${\varepsilon _{\bf{R}}}$\qquad\ ${\varepsilon _{\bf{t}}}$   &   ${\varepsilon _{\bf{R}}}$\qquad\ ${\varepsilon _{\bf{t}}}$ &   ${\varepsilon _{\bf{R}}}$\qquad\ ${\varepsilon _{\bf{t}}}$  &   ${\varepsilon _{\bf{R}}}$\qquad\ ${\varepsilon _{\bf{t}}}$  &   ${\varepsilon _{\bf{R}}}$\qquad\ ${\varepsilon _{\bf{t}}}$\\
						\hline
						\small{00 (4541 images)}&                   0.139 \ 2.412 &  0.130  \  2.400& 0.229 \ 4.007 & 0.065 \ 2.469 & 0.050 \   2.190   & 0.066 \ 2.519 & 0.142  \  2.499  & 0.280  \  2.243 &\textbf{0.031} \ \textbf{1.738}     \\
						\rowcolor{gray!10}\small{01 (1101 images)}& 0.158 \ 5.231 &  0.171  \  4.102& 0.762 \ 41.19 & 0.137 \ 4.782 & 0.125 \   11.91   & 0.105 \ 3.781 & 0.146  \  3.654  & 0.168  \  2.486 &\textbf{0.025} \ \textbf{1.428}     \\
						\small{02 (4661 images)}&                   0.123 \ 1.740 &  0.126  \  1.739& 0.186 \ 2.508 & 0.057 \ 1.825 & 0.044 \   1.579   & 0.057 \ 1.821 & 0.121  \  1.702  & 0.213  \  1.975 &\textbf{0.030} \ \textbf{1.558}     \\
						\rowcolor{gray!10}\small{03 \ \ (801 images)}& 0.115 \ 2.744&0.108  \  2.805& 0.265 \ 6.191 & 0.064 \ 3.116 & 0.069 \   3.712   & 0.062 \ 3.258 & 0.113  \  2.731  & 0.238  \  \textbf{1.849} &\textbf{0.037} \ 1.888     \\
						\small{04 \ \ (271 images)}&                   0.099 \ 1.560&0.116  \  1.746& 0.202 \ 3.619 & 0.050 \ 1.564 & 0.051 \   1.708   & 0.045 \ 1.635 & 0.100  \  1.725  & 0.116  \  1.768 &\textbf{0.020} \ \textbf{1.228}     \\
						\rowcolor{gray!10}\small{05 (2761 images)}& 0.119 \ 2.289 &  0.112  \  2.281& 0.199 \ 4.155 & 0.054 \ 2.337 & 0.052 \   2.544   & 0.056 \ 2.406 & 0.116  \  2.273  & 0.185  \  2.354 &\textbf{0.022} \ \textbf{1.532}     \\
						\small{06 (1101 images)}&                   0.116 \ 2.071 &  0.118  \  1.862& 0.168 \ 2.739 & 0.053 \ 1.757 & 0.092 \   2.721   & 0.056 \ 1.760 & 0.115  \  1.956  & 0.137  \  2.247 &\textbf{0.023} \ \textbf{1.303}     \\
						\rowcolor{gray!10}\small{07 (1101 images)}& 0.119 \ 3.002 &  0.112  \  3.029& 0.245 \ 6.397 & 0.058 \ 2.810 & 0.065 \   4.554   & 0.054 \ 3.048 & 0.137  \  2.892  & 0.173  \  2.902 &\textbf{0.023} \ \textbf{1.820}     \\
						\small{08 (4071 images)}&                   0.116 \ 2.386 &  0.111  \  2.349& 0.196 \ 3.909 & 0.051 \ 2.433 & 0.046 \   2.422   & 0.053 \ 2.457 & 0.108  \  2.344  & 0.203  \  2.569 &\textbf{0.024} \ \textbf{1.911}     \\
						\rowcolor{gray!10}\small{09 (1591 images)}& 0.133 \ 1.977 &  0.125  \  1.806& 0.179 \ 2.592 & 0.056 \ 1.838 & 0.046 \   1.656   & 0.058 \ 1.793 & 0.124  \  1.876  & 0.189  \  1.997 &\textbf{0.027} \ \textbf{1.440}     \\
						\small{10 (1201 images)}&                   0.127 \ 1.889 &  0.115  \  1.893& 0.201 \ 2.781 & 0.052 \ 1.932 & 0.040 \   1.658   & 0.058 \ 1.888 & 0.203  \  2.057  & 0.223  \  2.296 &\textbf{0.025} \ \textbf{1.586}     \\
						\hline					
			\end{tabular}}}
		\end{center}
		\caption{Rotation and translation error on \texttt{KITTI} sequences (unit: degree).}
		\label{VerticalRTErrror}
	\end{table*}

	We also evaluate the accuracy of the proposed methods \texttt{1AC~plane} and \texttt{2AC~plane} for increasing planar motion noise. 
	To test such noise, we added a small randomly generated X-axis, Z-axis rotation and a YZ-plane translation~\cite{choi2018fast} to the motion of the multi-camera system. The magnitude of non-planar motion noise ranges from $0^\circ$ to $1^\circ$ and the standard deviation of the image noise is set to $0.5$ pixel. Figures~\ref{fig:RT_planar}(d--f) show the performance of the proposed \texttt{1AC~plane} method and \texttt{2AC~plane} method against planar motion noise. Methods \texttt{17pt-Li}, \texttt{8pt-Kneip}, \texttt{6pt-Stew{\'e}nius} and \texttt{6AC-Ventura} deal with the 6DOF motion case and, thus they are not affected by the noise in the planarity assumption. It can be seen that the rotation accuracy of the \texttt{2AC~plane} method performs better than comparative methods when the planar motion noise is less than $0.2^\circ$. Since the estimation accuracy of translation direction of the \texttt{2AC~plane} method in Fig.~\ref{fig:RT_planar}(f) performs satisfactory, the main reason for poor performance of translation estimation is that the metric scale estimation is sensitive to the planar motion noise. In comparison with the \texttt{2AC~plane} method, the \texttt{1AC~plane} method has similar performance in rotation estimation, but performs poorly in translation estimation. The translation accuracy decreases significantly with the increase of the planar motion noise.
	
	Both the \texttt{1AC~plane} and \texttt{2AC~plane} methods have a significant computational advantage over comparative methods, because the efficient solver for 4-degree polynomial equation takes only about 3.6~$\mu s$. Moreover, since only a single AC is required, the \texttt{1AC~plane} method has the advantage of detecting a correct inlier set efficiently, which can then be used for accurate motion estimation with non-linear optimization. See supplementary material for details. 
	
	\vspace{-5pt} 
	\subsubsection{Motion with Known Vertical Direction}
	In this set of experiments, the translation direction of two multi-camera reference frames is chosen to produce either forward, sideways or random motions. The second reference frame is rotated around three axes randomly with angles ranging from $-10^\circ$ to $10^\circ$. Assuming known roll and pitch angles, the multi-camera reference frame is aligned with the vertical direction. Due to space limitations, we only show the results for random motion. Other results are in the supplementary material. Figs.~\ref{fig:RT_1AC}(a) and (d) show the performance of \texttt{2AC~vertical} against image noise with perfect IMU data.
	The proposed method is robust to image noise and performs better than the other methods. 
	
	\begin{figure}[tbp]
		\begin{center}
			\vspace{12pt}
			\includegraphics[width=0.95\linewidth]{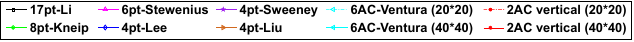}\\
			\subfigure[\scriptsize{${\varepsilon _{\bf{R}}}$ with image noise}]
			{
				\includegraphics[width=0.303\linewidth]{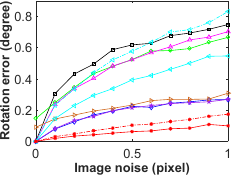}
			}
			\subfigure[\scriptsize{${\varepsilon _{\bf{R}}}$ with pitch angle noise}]
			{
				\includegraphics[width=0.303\linewidth]{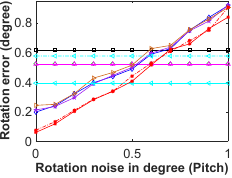}
			}
			\subfigure[\scriptsize{${\varepsilon _{\bf{R}}}$ with roll angle noise}]
			{
				\includegraphics[width=0.303\linewidth]{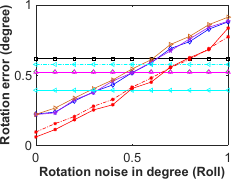}
			}
			\subfigure[\scriptsize{${\varepsilon _{\bf{t}}}$ with image noise}]
			{
				\includegraphics[width=0.303\linewidth]{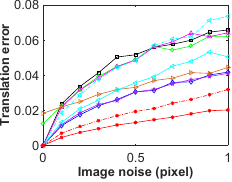}
			}
			\subfigure[\scriptsize{${\varepsilon _{\bf{t}}}$ with pitch angle noise}]
			{
				\includegraphics[width=0.303\linewidth]{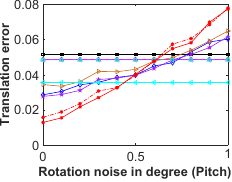}
			}
			\subfigure[\scriptsize{${\varepsilon _{\bf{t}}}$ with roll angle noise}]
			{
				\includegraphics[width=0.303\linewidth]{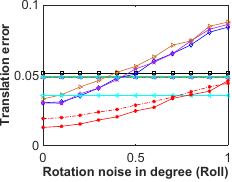}
			}
		\end{center}
		\caption{Rotation and translation error under random motion with known vertical direction. Upper row: rotation error. Bottom row: translation error. (a,d): varying image noise. (b,e) and (c,f): varying IMU angle noise and fixed $1.0$ pixel std.\ image noise.}
		\label{fig:RT_1AC}
	\end{figure}

	\begin{table*}[t]
	\begin{center}
		\setlength{\tabcolsep}{1.0mm}{
			\scalebox{0.9776}{
				\begin{tabular}{|c||c|c|c|c|c|c|c|c|c|}
					\hline
					\small{Methods} &  \footnotesize{17pt-Li~\cite{li2008linear}} & \footnotesize{8pt-Kneip~\cite{kneip2014efficient}} &  \footnotesize{6pt-}{\footnotesize{St.}}~\footnotesize{\cite{henrikstewenius2005solutions}}    &\footnotesize{4pt-Lee~\cite{hee2014relative}} & \footnotesize{4pt-Sw.~\cite{sweeney2014solving}}& \footnotesize{4pt-Liu~\cite{liu2017robust}} & \footnotesize{6AC-Ven.~\cite{alyousefi2020multi}} & \footnotesize{\textbf{2AC~plane}}& \footnotesize{\textbf{2AC~vertical}}  \\
					\hline
					\small{Mean time }& 52.82 & 10.36 & 79.76& 0.85& 0.63& 0.45& 6.83 & \textbf{0.07} & 0.09\\
					\hline
					\small{Standard deviation}& 2.62 & 1.59 & 4.52& 0.093& 0.057& 0.058& 0.61 &\textbf{0.0071} & 0.0086\\
					\hline
		\end{tabular}}}
	\end{center}
	\caption{Runtime of RANSAC averaged over \texttt{KITTI} sequences combined with different solvers (unit:~$s$).}
	\label{RANSACTime}
	\end{table*}
	
	Figs.~\ref{fig:RT_1AC}(b,e) and (c,f) show the performance of \texttt{2AC~vertical} against IMU noise in the random motion case, while the standard deviation of the image noise is fixed at $0.5$ pixel. Note that the methods \texttt{17pt-Li}, \texttt{8pt-Kneip}, \texttt{6pt-Stew{\'e}nius} and \texttt{6AC-Ventura} are not influenced by IMU noise, because these methods do not use the known vertical direction as a prior. 
	The methods \texttt{4pt-Lee}, \texttt{4pt-Sweeney} and \texttt{4pt-Liu} use the known vertical direction as a prior. It is interesting to see that the proposed method outperforms the comparative methods in the random motion case, even though the IMU noise is around $0.4^\circ$. The results under forward and sideways motion also demonstrate that the \texttt{2AC~vertical} method performs basically better than all comparative methods against image noise and provides comparable accuracy for increasing IMU noise.

	\subsection{Experiments on Real Data}
	We test the performance of our methods on the \texttt{KITTI} dataset~\cite{geiger2013vision} that consists of successive video frames from a forward facing stereo camera. The ground truth pose is provided from the built-in GPS/IMU units. We ignore the overlap in their fields of view and treat it as a general multi-camera system. The sequences labeled from 0 to 10, which have ground truth, are used for the evaluation. Therefore, the methods were tested on a total of 23000 image pairs. The ACs between consecutive frames in each camera are established by applying the ASIFT~\cite{morel2009asift} detector. 
	The extraction of ACs can also be sped up by MSER~\cite{matas2004robust}, GPU acceleration, or approximating ACs from SIFT features for subsequent video frames.
	The ACs across the two cameras are not matched and the metric scale is not estimated as the movement between consecutive frames is small. Besides, integrating the acceleration over time from an IMU is more suitable for recovering the scale~\cite{NutziWeiss-411}. All the solvers have been integrated into a RANSAC scheme.  
	
	The proposed methods \texttt{2AC~plane} and \texttt{2AC vertical} are compared against \texttt{17pt-Li}~\cite{li2008linear}, \texttt{8pt-Kneip}~\cite{kneip2014efficient}, \texttt{6pt-Stew{\'e}nius}~\cite{henrikstewenius2005solutions}, \texttt{4pt-Lee}~\cite{hee2014relative}, \texttt{4pt-Sweeney}~\cite{sweeney2014solving}, \texttt{4pt-Liu}~\cite{liu2017robust} and \texttt{6AC-Ventura}~\cite{alyousefi2020multi}. Since the \texttt{KITTI} dataset is captured by a stereo rig with both cameras having the same altitude, that is a degenerate case for the \texttt{1AC~plane} method, it is not performed in the experiment. For the \texttt{2AC~plane} method, the results are also compared to the ground truth of the 6DOF relative pose, even though this method only estimates two angles ($\theta$, $\phi$) with the plane motion assumption. For the \texttt{2AC vertical} method, the roll and pitch angles obtained from the GPS/IMU units are used to align the multi-camera reference frame with the vertical direction~\cite{SaurerVasseur-457,Guan2020CVPR,Li2020Relative}. To ensure the fairness of the experiment, the roll and pitch angles are also provided for the methods \texttt{4pt-Lee}~\cite{hee2014relative}, \texttt{4pt-Sweeney}~\cite{sweeney2014solving} and \texttt{4pt-Liu}~\cite{liu2017robust}, which use the known vertical direction as a prior. Table~\ref{VerticalRTErrror} shows the results of the rotation and translation estimation. The median error for each individual sequence is used to evaluate the estimation accuracy. The runtime of RANSAC averaged over \texttt{KITTI} sequences combined with different solvers is shown in Table~\ref{RANSACTime}. The reported runtimes include the robust relative pose estimation without feature extraction, \emph{i.e.}, recovering the relative pose by RANSAC combined with a minimal solver.
	
	The proposed \texttt{2AC vertical} method offers the best overall performance among all the methods. The \texttt{6pt-Stew{\'e}nius} method performs poorly on sequence 01, because this sequence is a highway with few tractable close objects, and this method always fails to select the best candidate from multiple solutions under forward motion in the RANSAC scheme. Besides, it is interesting to see that the translation accuracy of the \texttt{2AC~plane} method basically outperforms the \texttt{6pt-Stew{\'e}nius} method, even though the planar motion assumption does not fit the \texttt{KITTI} dataset well. To visualize the comparison results, the estimated trajectory for sequence 00 is plotted in the supplementary material. Due to the benefits of computational efficiency, both the \texttt{2AC~plane} method and the \texttt{2AC vertical} method are quite suitable for finding a correct inlier set, which is then used for accurate motion estimation in visual odometry. 
	
	\section{\label{sec:conclusion}Conclusion}
	By exploiting the geometric constraints which interprets the relationship of ACs and the generalized camera model, we have proposed three solutions for the relative pose estimation of a multi-camera system. Under the planar motion assumption, we present two solvers to recover the relative pose of a multi-camera system, including a minimal solver using a single AC and a solver based on two ACs. In addition, a minimal solution with two ACs is proposed to solve for the relative pose of the multi-camera system with known vertical direction. 
	Both planar motion and known vertical direction assumptions are realistic in autonomous driving scenes. We evaluate the proposed solvers on synthetic data and real image sequence datasets. The experimental results clearly showed that the proposed methods provide better efficiency and accuracy for relative pose estimation in comparison to state-of-the-art methods.
	
	\section*{Acknowledgments}
	This work has been partially funded by the National Natural Science Foundation of China (11902349, 11727804).   
	
	{\small
		\bibliographystyle{ieee_fullname}
		\bibliography{myBibGuan}
	}
	
	\newpage
	\appendix
	\large
	\begin{center}
		{\bf Supplementary Material }
	\end{center}
	\normalsize

\section{\label{sec:6DOFmotion_supp}Geometric Constraints from ACs}
For AC $({\mathbf{x}}_{ij}, {\mathbf{x}}'_{ij}, \mathbf{A})$, we get three polynomials for six unknowns $\{q_x, q_y, q_z, t_x, t_y, t_z\}$ from Eqs.~\eqref{GECS6dof} and~\eqref{eq:E6dof_Ac6} in the paper. After separating $q_x$, $q_y$, $q_z$ from $t_x$, $t_y$, $t_z$, we arrive at equation system 	
{\begin{equation} 
	\frac{1}{1+q_x^2+q_y^2+q_z^2}\underbrace {\begin{bmatrix}
		{M_{11}}&  {M_{12}}&   {M_{13}}& {M_{14}}\\
		{M_{21}}&  {M_{22}}&   {M_{23}}& {M_{24}}\\
		{M_{31}}&  {M_{32}}&   {M_{33}}& {M_{34}}
		\end{bmatrix}}_{{\mathbf{M}}\left( {{q_x,q_y,q_z}} \right)}
	\begin{bmatrix}
	{{{t}_x}}\\
	{{{t}_y}}\\
	{{{t}_z}}\\
	1
	\end{bmatrix} = {\mathbf{0}},
	\label{eq:euq_qxqyqz1}
\end{equation}}	
where the elements $M_{ij}$ $(i=1,\ldots,3; j=1,\ldots,4)$ of the coefficient matrix ${\mathbf{M}(q_x,q_y,q_z)}$ are formed by the polynomial coefficients and three unknown variables $q_x,q_y,q_z$:
\begin{equation}
{\mathbf{M}(q_x,q_y,q_z)} = \begin{bmatrix}
[2]&\ [2]&\ [2]&\ [2]\\
[2]&\ [2]&\ [2]&\ [2]\\
[2]&\ [2]&\ [2]&\ [2]
\end{bmatrix},
\label{eq:M_qxqyqz2}
\end{equation}
where $[N]$ denotes a polynomial of degree $N$ in variables $q_x,q_y,q_z$.

Equation~\eqref{eq:euq_qxqyqz1} imposes three independent constraints on six unknowns $\{q_x, q_y, q_z, t_x, t_y, t_z\}$. This constraint can be easily generalized to special cases of multi-camera motion, \emph{e.g.}, planar motion and known vertical direction. 
\section{\label{sec:planarmotion_Supp}Relative Pose Under Planar Motion}
\subsection{Details about the Coefficient Matrix ${\mathbf{M}(q_y)}$}
Refer to Eq.~\eqref{eq:euq_q1} in the paper, three constraints obtained from a single AC are stacked into three equations in three unknowns. The elements $M_{ij}$ $(i=1,\small{\ldots},3; j=1,\small{\ldots},3)$ of the coefficient matrix ${\mathbf{M}(q_y)}$ are formed by the polynomial coefficients and one unknown variable $q_y$, which can be described as:
\begin{equation}
{\mathbf{M}(q_y)} = \begin{bmatrix}
[2]&\ [2]&\ [2]\\
[2]&\ [2]&\ [2]\\
[2]&\ [2]&\ [2]
\end{bmatrix},
\label{eq:M_qy3}
\end{equation}
where $[N]$ denotes a polynomial of degree $N$ in variable $q_y$.

\begin{table*}[t]
	\begin{center}
		\setlength{\tabcolsep}{0.9mm}{
			\scalebox{1.0}{
				\begin{tabular}{|c||c|c|c|c|c|c|c|c|c|c|}
					\hline
					\small{Methods} &  \footnotesize{17pt-Li~\cite{li2008linear}} & \footnotesize{8pt-Kneip~\cite{kneip2014efficient}} &  \footnotesize{6pt-}{\footnotesize{St.}}~\footnotesize{\cite{henrikstewenius2005solutions}}    &\footnotesize{4pt-Lee~\cite{hee2014relative}} & \footnotesize{4pt-Sw.~\cite{sweeney2014solving}}& \footnotesize{4pt-Liu~\cite{liu2017robust}}& \footnotesize{6AC-Ven.~\cite{alyousefi2020multi}}& \footnotesize{\textbf{1AC~plane}}&\footnotesize{\textbf{2AC~plane}}& \footnotesize{\textbf{2AC~vertical}}  \\
					\hline
					\small{Timings}& 43.3 & 102.0& 3275.4& 26.5& 22.2& 3.7&  38.1 &  \textbf{3.6}& \textbf{3.6} & 17.8\\
					\hline
		\end{tabular}}}
	\end{center}
	\vspace{-5pt}
	\caption{Run-time comparison of motion estimation algorithms (unit:~$\mu s$).}	
	\label{SolverTime}
\end{table*}

\subsection{Degenerate Case}
\begin{figure}[htbp]
	\begin{center}
		\includegraphics[width=0.8\linewidth]{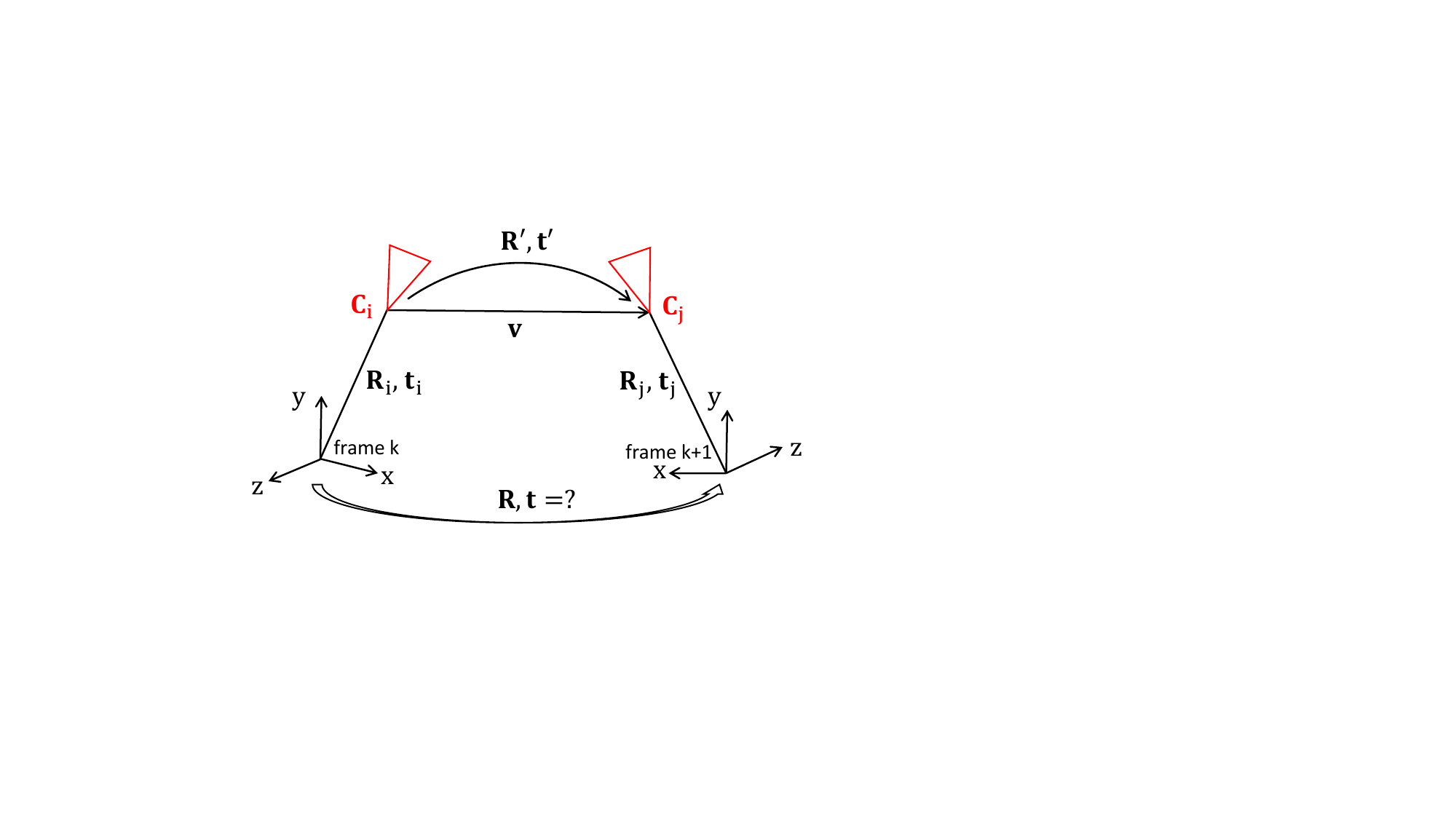}
	\end{center}
	\caption{Planar motion of a multi-camera system.}
	\label{fig:degenerate}
\end{figure}

\begin{proposition}
	\label{theorem:nister}
	Consider a multi-camera system which is under planar motion. Assume the following three conditions are satisfied. (1) The rotation axis is $y$-axis, and the translation is on $xz$-plane. (2) There is one AC across camera $C_i$ in frame $k$ and camera $C_j$ in frame $k+1$ ($C_i$ and $C_j$ can be the same or different cameras). (3) The optical centers of camera $C_i$ and $C_j$ have the same $y$-coordinate. Then this case is degenerate. Specifically, the rotation can be correctly recovered, while both the translation direction and the translation scale cannot be estimated using one AC.
\end{proposition}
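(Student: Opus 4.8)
The plan is to establish the degeneracy \emph{geometrically}, by exhibiting an explicit one-parameter family of relative poses that all reproduce the observed affine correspondence, share the true $y$-axis rotation $\mathbf{R}_y$, and differ only in the translation; this simultaneously shows that the rotation is pinned to its correct value while the translation is not determined. First I would realize the single AC as a scene: a $3$D point $\mathbf{X}$ lying on a local tangent plane $\Pi$, seen by camera $C_i$ in frame $k$ (optical center $\mathbf{t}_i$, orientation $\mathbf{R}_i$) and by camera $C_j$ in frame $k+1$ (with optical center $\mathbf{R}_y\mathbf{t}_j+\mathbf{t}$ and orientation $\mathbf{R}_y\mathbf{R}_j$ when expressed in the frame-$k$ reference, in the paper's motion convention). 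In this picture the observables are the two viewing rays through $\mathbf{X}$ and the local affine map $\mathbf{A}$, the latter being the first-order approximation of the homography that $\Pi$ induces between the two views; this is precisely the content of the generalized epipolar constraint \eqref{GECS6dof} and the affine constraint \eqref{eq:E6dof_Ac6} that feed the planar system $\mathbf{M}(q_y)\,[t_x,t_z,1]^T=\mathbf{0}$ of \eqref{eq:euq_q1}.

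The key device is the uniform scaling of space about the optical center of the frame-$k$ camera, $T_\mu(\mathbf{p})=\mathbf{t}_i+\mu(\mathbf{p}-\mathbf{t}_i)$ for $\mu>0$. I apply $T_\mu$ to $\mathbf{X}$, to the plane $\Pi$, and to the frame-$k+1$ camera $C_j$, while leaving the frame-$k$ camera $C_i$ untouched (it sits at the center of the scaling and is therefore fixed). Because $T_\mu$ is a global similarity, it maps the whole configuration (two centers, point, tangent plane) to a \emph{similar} configuration; since image projections depend only on the directions from the centers and the camera orientations are unchanged, every image point and the local affine map $\mathbf{A}$ are reproduced exactly. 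Thus the transformed scene explains the \emph{same} AC.

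It remains to read off the motion of the transformed scene and to use condition~(3). The rotation is untouched, so it stays equal to the true $\mathbf{R}_y$. The new frame-$k+1$ center is $T_\mu(\mathbf{R}_y\mathbf{t}_j+\mathbf{t})$, which must again take the form $\mathbf{R}_y\mathbf{t}_j+\mathbf{t}(\mu)$ for a new translation
\begin{equation}
\mathbf{t}(\mu)=\mathbf{t}_i+\mu\,(\mathbf{R}_y\mathbf{t}_j+\mathbf{t}-\mathbf{t}_i)-\mathbf{R}_y\mathbf{t}_j .\nonumber
\end{equation}
Here is the crux. A valid planar motion requires $\mathbf{t}(\mu)$ to have zero $y$-component. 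Since $\mathbf{R}_y$ preserves heights and $\mathbf{t}$ has no vertical part, the $y$-coordinates of $\mathbf{t}_i$ and of $\mathbf{R}_y\mathbf{t}_j$ are exactly the two optical-center heights, which coincide by condition~(3); substituting the common height $h$ gives $\big(\mathbf{t}(\mu)\big)_y=h+\mu(h-h)-h=0$. Hence $\mathbf{t}(\mu)$ is planar for every $\mu$, and $\mathbf{t}(\mu)-\mathbf{t}=(\mu-1)(\mathbf{R}_y\mathbf{t}_j+\mathbf{t}-\mathbf{t}_i)$ is nonzero whenever the two optical centers are distinct. Therefore $\{(\mathbf{R}_y,\mathbf{t}(\mu))\}_{\mu>0}$ is a genuine one-parameter family of solutions of the single-AC planar problem: the rotation is constant and correct, but the translation sweeps an entire line, so it cannot be recovered.

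The step I expect to be the real obstacle is precisely the ``stays planar'' verification above: it is the only place where the equal-height hypothesis enters, and it is what separates the degenerate configuration from the generic one, where the rescaled center would acquire a nonzero vertical offset, break the planar-motion assumption, and force $\mu=1$. Two further points must be handled to match the statement exactly. When $C_i=C_j$ the two centers coincide, condition~(3) is automatic, and the family reduces to the classical monocular scale ambiguity; this covers, for instance, the \texttt{KITTI} stereo setup. Finally, to assert that the rotation is genuinely \emph{recovered} (not merely consistent across the family) I would verify that $\mathbf{M}(q_y)$ still drops rank only at the true $q_y$ --- equivalently, that the rank deficiency of $\det\big(\mathbf{M}(q_y)/(1+q_y^2)\big)$ is confined to the null-space direction carrying the translation --- so that solving \eqref{eq:euq_q1} returns the correct $\theta$ via \eqref{eq:Ryt1} while leaving $(t_x,t_z)$ undetermined. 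An equivalent, purely algebraic route to the whole proposition is to substitute condition~(3) into the entries of $\mathbf{M}(q_y)$ and show that $\mathbf{M}(q_y)$ has rank $1$ at the true rotation, giving a two-dimensional null space and hence a line of admissible $[t_x,t_z,1]^T$; I would keep the geometric argument as the primary proof, since it exposes \emph{why} equal heights is the responsible condition.
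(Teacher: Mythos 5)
Your construction for the translation half is correct, and it takes a genuinely different route from the paper's. The paper proves degeneracy algebraically through an auxiliary solver: it first treats ($C_i$, frame $k$) and ($C_j$, frame $k+1$) as a monocular pair undergoing rotation about a known axis and invokes the perspective one-AC solver of \cite{Guan2020CVPR} to recover that pair's relative pose $(\mathbf{R}',\mathbf{t}')$ up to an unknown scale $\lambda$; writing the rig motion in terms of this pose yields $\mathbf{R}=\mathbf{R}_j\mathbf{R}'\mathbf{R}_i^T$ together with the linear system
\begin{equation}
\lambda\,(\mathbf{R}_j\mathbf{t}') - \mathbf{t} + (\mathbf{t}_j - \mathbf{R}\mathbf{t}_i) = \mathbf{0},\qquad \mathbf{t}=[t_x,0,t_z]^T,\nonumber
\end{equation}
and then observes that the $y$-component equation ceases to constrain $\lambda$ exactly when the second entry of $\mathbf{R}_j\mathbf{t}'$ vanishes, which it shows is condition~(3). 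Your scaling family $T_\mu$ is precisely the geometric realization of that free $\lambda$: scaling about $\mathbf{t}_i$ sweeps the baseline length between the two relevant optical centers while fixing its direction, both images, and the affine map. So both proofs exhibit the same one-parameter ambiguity; yours is self-contained (no external solver is invoked) and makes transparent that the equal-height hypothesis is exactly what keeps $\mathbf{t}(\mu)$ in the $xz$-plane, while the paper's algebraic route makes explicit that away from condition~(3) the same system pins $\lambda$ uniquely.

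However, the rotation half of the statement --- ``the rotation can be correctly recovered'' --- is left unproven in your proposal, and this is a genuine gap. Exhibiting a family along which the rotation is constant only shows that the ambiguity you constructed does not corrupt the rotation; it does not exclude consistent pose hypotheses with a different $q_y$. You acknowledge this and defer it to checking that $\mathbf{M}(q_y)$ drops rank only at the true $q_y$, but that check is the nontrivial part: in the degenerate configuration $\mathbf{M}(q_y)$ has a two-dimensional null space at the true root (rank at most one), so you would in particular have to rule out $\det(\mathbf{M}(q_y))\equiv 0$, which requires a computation or argument you have not supplied. The paper gets this half essentially for free from its decomposition: the monocular solver determines $(\mathbf{R}',\mathbf{t}')$ up to the finite ambiguity inherent in minimal solvers, independently of $\lambda$, and $\mathbf{R}=\mathbf{R}_j\mathbf{R}'\mathbf{R}_i^T$ then recovers the rig rotation, so only the scale is lost. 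To complete your proof, either carry out the rank/determinant analysis you sketch, or graft the paper's step~(i) onto your geometric argument for the translation.
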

\begin{proof}
	Figure~\ref{fig:degenerate} illustrates the degenerate case described in the proposition. {Note that the multi-camera reference frame is established on the multi-camera system, not on a certain camera coordinate system.} Our proof is based on the following observation: whether a case is degenerate is independent of the relative pose solvers. Based on this point, we construct a new minimal solver which is different from the proposed solver in the paper.
	
	(i) Since the multi-camera system is rotated by $y$-axis, the camera $C_i$ in frame $k$ and camera $C_j$ in frame $k+1$ are under motion with known rotation axis. Thus we can use the \texttt{1AC method}~\cite{Guan2020CVPR} for perspective cameras to estimate the relative pose between $C_i$ and $C_j$. This is a minimal solver since one AC provides 3 independent constraints and there are three unknowns (one unknown for rotation, two unknowns for translation by excluding scale-ambiguity). Denote the recovered rotation and translation between $C_i$ and $C_j$ as $(\mathbf{R}', \mathbf{t}')$, where $\mathbf{t}'$ is a unit vector. The scale of the translation vector cannot be recovered at this moment. Denote the unknown translation scale as $\lambda$.
	
	(ii) From Fig.~\ref{fig:degenerate}, we have
	{ \begin{equation}
		\begin{aligned}
		&\begin{bmatrix}
		{\mathbf{R}} & \mathbf{t}\\
		{\mathbf{0}}&{1}\\
		\end{bmatrix} = \begin{bmatrix}{\mathbf{R}_{j}}&{\mathbf{t}_{j}}\\
		{\mathbf{0}}&{1}\\
		\end{bmatrix}
		\begin{bmatrix}{\mathbf{R}'}&{ \lambda \mathbf{t}'}\\
		{\mathbf{0}}&{1}\\
		\end{bmatrix}
		\begin{bmatrix}{\mathbf{R}_{i}}&{\mathbf{t}_{i}}\\
		{\mathbf{0}}&{1}\\
		\end{bmatrix}^{-1} \\
		& \qquad \ \ =\begin{bmatrix}{{\mathbf{R}_{j}}{\mathbf{R}'}{\mathbf{R}_{i}^T}}& \ \lambda \mathbf{R}_j \mathbf{t}' + \mathbf{t}_j - \mathbf{R}_j \mathbf{R}' \mathbf{R}_i^T \mathbf{t}_i\\
		{\mathbf{0}}& \ {1}\\
		\end{bmatrix}.
		\end{aligned}
		\label{eq:trans_general}
		\end{equation}}
	
	\noindent From Eq.~\eqref{eq:trans_general}, we have
	\begin{align}
	&\mathbf{R} = \mathbf{R}_j \mathbf{R}' \mathbf{R}_i^T, \label{eq:r_equ} \\
	&\mathbf{t} = \lambda \mathbf{R}_j \mathbf{t}' + \mathbf{t}_j - \mathbf{R}_j \mathbf{R}' \mathbf{R}_i^T \mathbf{t}_i.
	\label{eq:t_equ}
	\end{align}
	From Eq.~\eqref{eq:r_equ}, the rotation $\mathbf{R}$ between frame $k$ and frame $k+1$ for the multi-camera system can be recovered.
	From Eq.~\eqref{eq:t_equ}, we have
	\begin{align}
	\lambda (\mathbf{R}_j \mathbf{t}') - \mathbf{t} + (\mathbf{t}_j - \mathbf{R} \mathbf{t}_i) = \mathbf{0}.
	\label{eq:tran_linear}
	\end{align}
	In Eq.~\eqref{eq:tran_linear}, note that $\mathbf{t} = [t_x, 0, t_z]^T$ due to planar motion. Thus this linear equation system has $3$ unknowns $\{\lambda, t_x, t_z\}$ and $3$ equations. Usually the unknowns can be uniquely determined by solving this equation system. However, if the second entry of $\mathbf{R}_j \mathbf{t}'$ is zero, it can be verified that $\lambda$ becomes a free parameter. In other words, the translation cannot be determined and this is a degenerate case.
	
	(iii) Finally, we exploit the geometric meaning of the degenerate case, i.e., the second entry of $\mathbf{R}_j \mathbf{t}'$ is zero. Denote the normalized vector originated from $C_i$ to $C_j$ as $\mathbf{v}$. Since $\mathbf{v}$ represents the normalized translation vector between $C_i$ and $C_j$, the coordinates of $\mathbf{v}$ in reference of camera $C_j$ is $\mathbf{t}'$. Further, the coordinates of $\mathbf{v}$ in frame $k+1$ is $\mathbf{R}_j \mathbf{t}'$. The second entry of $\mathbf{R}_j \mathbf{t}'$ is zero means that the endpoints of $\mathbf{v}$ have the same $y$-coordinate in frame $k+1$, which is the condition~(3) in the proposition.
\end{proof}

\section{\label{sec:knownverticaldirection_Supp}Relative Pose with Known Vertical Direction}
Refer to Eq.~\eqref{eq:euq_Ev1} in the paper, four constraints obtained from two ACs are stacked into four equations in four unknowns. The elements $\tilde{M}_{ij}$ $(i=1,\small{\ldots},4; j=1,\small{\ldots},4)$ of the coefficient matrix $\tilde{\mathbf{M}}({q_y})$ are formed by the polynomial coefficients and one unknown variable $q_y$, which can be described as:	
\begin{equation}
{\tilde{\mathbf{M}}({q_y})} = \begin{bmatrix}
[2]&\ [2]&\ [2]&\ [2]\\
[2]&\ [2]&\ [2]&\ [2]\\
[2]&\ [2]&\ [2]&\ [2]\\
[2]&\ [2]&\ [2]&\ [2]
\end{bmatrix},
\label{eq:M_qy4}
\end{equation}
where $[N]$ denotes a polynomial of degree $N$ in variable $q_y$. 

\section{\label{sec:experiments_supp}Experiments}
\subsection{Efficiency Comparison}
The runtimes of the solvers are evaluated on an Intel(R) Core(TM) i7-7800X 3.50GHz. All algorithms are implemented in C++. Methods \texttt{17pt-Li}, \texttt{8pt-Kneip} and \texttt{6pt-Stewenius} are provided in the OpenGV library~\cite{kneip2014opengv}. We implemented the \texttt{4pt-Lee} method. For methods \texttt{4pt-Sweeney}, \texttt{4pt-Liu} and \texttt{6AC-Ventura}, we used their publicly available implementations from GitHub. The average, over 10,000 runs, processing times of the solvers are shown in Table~\ref{SolverTime}. The runtimes of the methods \texttt{1AC~plane} , \texttt{2AC~plane} and \texttt{4pt-Liu} are the lowest, because these methods solve the 4-degree polynomial equation. The \texttt{2AC~vertical} which solves the 6-degree polynomial equation also requires low computation time.

\subsection{Numerical Stability}
\begin{figure}[t]
	\begin{center}
		\subfigure[]
		{
			\includegraphics[width=0.47\linewidth]{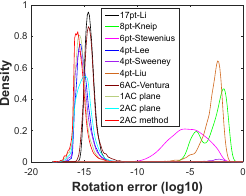}
		}
		\subfigure[]
		{
			\includegraphics[width=0.47\linewidth]{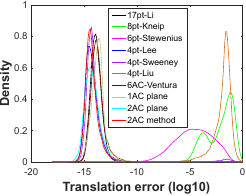}
		}
	\end{center}
	\vspace{-5pt}
	\caption{Probability density functions over estimation errors in the noise-free case (10 000 runs). The horizontal axis represents the log$_{10}$ errors and the vertical axis represents the density. (a) reports the rotation error. (b) reports the translation error. The proposed \texttt{1AC~plane} method, \texttt{2AC~plane} method and \texttt{2AC~vertical} are compared against \texttt{17pt-Li}~\cite{li2008linear}, \texttt{8pt-Kneip}~\cite{kneip2014efficient}, \texttt{6pt-Stew{\'e}nius}~\cite{henrikstewenius2005solutions}, \texttt{4pt-Lee}~\cite{hee2014relative}, \texttt{4pt-Sweeney} ~\cite{sweeney2014solving}, \texttt{4pt-Liu}~\cite{liu2017robust} and \texttt{6AC-Ventura}~\cite{alyousefi2020multi}.}
	\label{fig:Numerical}
\end{figure}    
Figure~\ref{fig:Numerical} reports the numerical stability of the solvers in the noise-free case. The procedure is repeated 10,000 times. The empirical probability density functions (vertical axis) are plotted as the function of the log$_{10}$ estimated errors (horizontal axis). Methods \texttt{1AC~plane}, \texttt{2AC~plane}, \texttt{2AC~vertical},  \texttt{17pt-Li}\cite{li2008linear}, \texttt{4pt-Lee}~\cite{hee2014relative}, \texttt{4pt-Sweeney}~\cite{sweeney2014solving} and \texttt{6AC-Ventura}~\cite{alyousefi2020multi} are numerically stable. It can also be seen that the \texttt{4pt-Sweeney} method has a small peak, both in the rotation and translation error curves, around $10^{-2}$. The \texttt{8pt-Kneip} method based on iterative optimization is susceptible to falling into local minima. Due to the use of first-order approximation of the relative rotation, the \texttt{4pt-Liu} method inevitably has greater than zero error in the noise-free case.

\subsection{{Planar Motion Estimation}}
In addition to efficiency and numerical stability, another important factor for a solver is the minimal number of required image points. The iteration number $N$ of RANSAC can be computed by $N=\log(1-p)/\log(1-(1-\epsilon)^s)$, where $s$ is the number of minimal image points, $\epsilon$ is the outlier ratio, and $p$ is the success probability. For a probability of success  $p$ = $99\%$, the RANSAC iterations needed with respect to the outlier ratio needed are shown in Figure~\ref{fig:RANSACIteration}. It can be seen that the iteration number of the RANSAC estimator increases exponentially with respect to the number of image points needed. For example, in a percentage of outliers $\epsilon$ = $50\%$, when the solvers require 1, 2, 4, 6, 8 and 17 points, the RANSAC estimator need 7, 16, 71, 292, 1177 and 603607 iterations, respectively. The proposed \texttt{1AC~plane} method which only uses a single AC requires the lowest number of RANSAC iterations. Since the proposed \texttt{2AC~plane} method need two ACs, the iteration number of RANSAC is also low in comparison to PC-based methods. Thus, our solvers can be used efficiently for detecting a correct inlier set when integrating them into the RANSAC framework. 
\begin{figure}[t]
	\begin{center}
		\includegraphics[width=0.75\linewidth]{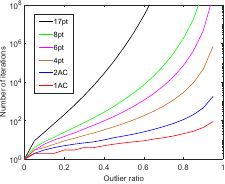}
	\end{center}
	\vspace{-5pt}
	\caption{Comparison of the RANSAC iteration number for $99\%$ of success probability.}
	\label{fig:RANSACIteration}
\end{figure} 

We evaluate the performance of the proposed \texttt{1AC~plane} method and \texttt{2AC~plane} method for outlier detection in presence of outliers. The outlier ratio is set to $50\%$. The other configurations of this synthetic experiment are set as same as using in Figures~\ref{fig:RT_planar}(d--f) in the paper. Figure~\ref{fig:Inlierset} shows the performance of the proposed methods against planar motion noise. It is interesting to see that the \texttt{1AC~plane} method recovers more than $50\%$ inliers and requires fewer number of RANSAC iterations, even though it performs poorly in translation estimation as shown in Figures~\ref{fig:RT_planar}(e--f) in the paper. Thus, the \texttt{1AC~plane} method has the advantage of detecting a correct inlier set efficiently, which can then be used for accurate motion estimation with non-linear optimization.

\begin{figure}[tbp]
	\begin{center}
		\subfigure[]
		{
			\includegraphics[width=0.47\linewidth]{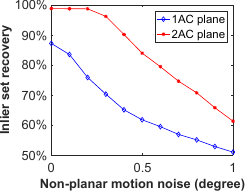}
		}
		\subfigure[]
		{
			\includegraphics[width=0.45\linewidth]{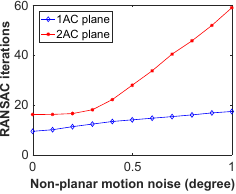}
		}
	\end{center}
	\vspace{-5pt}
	\caption{Rotation and translation error with varying planar motion noise. The image noise is fixed at $0.5$ pixel and the outlier ratio is set to $50\%$.}
	\label{fig:Inlierset}
\end{figure} 

\subsection{Motion with Known Vertical Direction}
\begin{figure}[t]
	\begin{center}
		\includegraphics[width=0.95\linewidth]{figure/Legend_2AC4DOF.pdf}\\
		\vspace{-5pt} 
		\subfigure[\scriptsize{${\varepsilon _{\bf{R}}}$ with image noise}]
		{
			\includegraphics[width=0.303\linewidth]{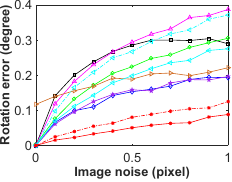}
		}
		\subfigure[\scriptsize{${\varepsilon _{\bf{R}}}$ with pitch angle noise}]
		{
			\includegraphics[width=0.303\linewidth]{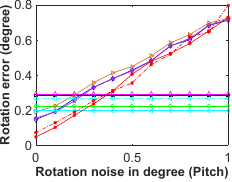}
		}
		\subfigure[\scriptsize{${\varepsilon _{\bf{R}}}$ with roll angle noise}]
		{
			\includegraphics[width=0.303\linewidth]{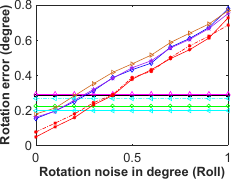}
		}
		\subfigure[\scriptsize{${\varepsilon _{\bf{t}}}$ with image noise}]
		{
			\includegraphics[width=0.303\linewidth]{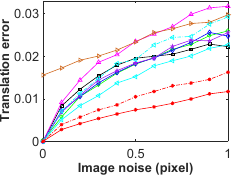}
		}
		\subfigure[\scriptsize{${\varepsilon _{\bf{t}}}$ with pitch angle noise}]
		{
			\includegraphics[width=0.303\linewidth]{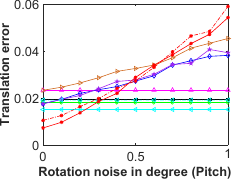}
		}
		\subfigure[\scriptsize{${\varepsilon _{\bf{t}}}$ with roll angle noise}]
		{
			\includegraphics[width=0.303\linewidth]{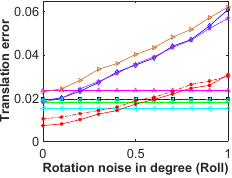}
		}
	\end{center}
	\vspace{-5pt}
	\caption{Rotation and translation error under forward motion with known vertical direction. Upper row: rotation error. Bottom row: translation error. (a,d): varying image noise. (b,e) and (c,f): varying IMU angle noise and fixed $1.0$ pixel std.\ image noise.} 
	\label{fig:RTForwardMotion_1AC}
\end{figure}

\begin{figure}[t]
	\begin{center}
		\includegraphics[width=0.95\linewidth]{figure/Legend_2AC4DOF.pdf}\\
		\subfigure[\scriptsize{${\varepsilon _{\bf{R}}}$ with image noise}]
		{
			\includegraphics[width=0.303\linewidth]{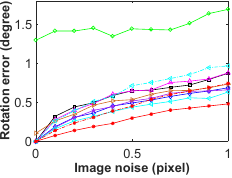}
		}
		\subfigure[\scriptsize{${\varepsilon _{\bf{R}}}$ with pitch angle noise}]
		{
			\includegraphics[width=0.303\linewidth]{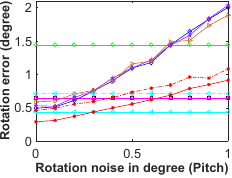}
		}
		\subfigure[\scriptsize{${\varepsilon _{\bf{R}}}$ with roll angle noise}]
		{
			\includegraphics[width=0.303\linewidth]{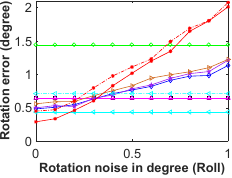}
		}
		\subfigure[\scriptsize{${\varepsilon _{\bf{t}}}$ with image noise}]
		{
			\includegraphics[width=0.303\linewidth]{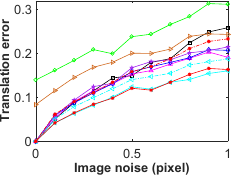}
		}
		\subfigure[\scriptsize{${\varepsilon _{\bf{t}}}$ with pitch angle noise}]
		{
			\includegraphics[width=0.303\linewidth]{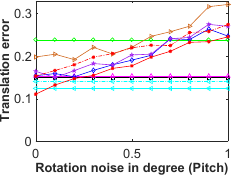}
		}
		\subfigure[\scriptsize{${\varepsilon _{\bf{t}}}$ with roll angle noise}]
		{
			\includegraphics[width=0.303\linewidth]{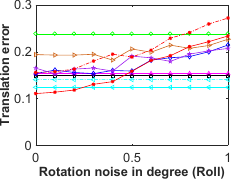}
		}
	\end{center}
	\vspace{-5pt}
	\caption{Rotation and translation error under sideways motion with known vertical direction. Upper row: rotation error. Bottom row: translation error. (a,d): varying image noise. (b,e) and (c,f): varying IMU angle noise and fixed $1.0$ pixel std.\ image noise.}
	\label{fig:RTSidewaysMotion_1AC}
\end{figure}

In this section we show the performance of the proposed \texttt{2AC~vertical} under forward and sideways motion. Figure~\ref{fig:RTForwardMotion_1AC} shows the performance of the proposed \texttt{2AC~vertical} under forward motion. It can be seen that \texttt{2AC~vertical} outperforms the comparative methods against image noise and provides comparable accuracy for increasing IMU noise, even though the size of the square is 20 pixels. Figure~\ref{fig:RTSidewaysMotion_1AC} shows the performance of the proposed \texttt{2AC~vertical} under sideways motion. The results demonstrate that when the side length of the square is 40 pixels, the \texttt{2AC~vertical} performs basically better than all compared methods against image noise and achieves comparable performance for increasing noise on the IMU data.

\subsection{{Using PCs converted from ACs}}
In this set of experiments, we evaluate the performance of PC-based solvers using the PCs converted from ACs. Given an AC as $({\mathbf{x}}, {\mathbf{x}}', \mathbf{A})$, where ${\mathbf{x}}$ and ${\mathbf{x}}'$ are the image coordinates of feature point in two views and $\mathbf{A}$ is the corresponding 2$\times$2 local affine transformation. Three generated PCs include an image point pair of AC and two hallucinated image point pairs calculated by the local affine transformation. Since local affine transformations are defined as the partial derivative, w.r.t. the image directions, of the related homography, they are valid only infinitesimally close to the image coordinates of AC. Thereby, one AC can only provide three approximate PCs – the error is not zero even for noise-free input~\cite{barath2018efficient}. Three approximate PCs converted from one AC can be computed as follows~\cite{barath2020making}: ${\mathbf{x}}+[0, \ w, \ 0; \ 0, \ 0, \ w]$ and ${\mathbf{x}}'+\mathbf{A}[0, \ w, \ 0; \ 0, \ 0, \ w]$, where $w$ determines the distribution area of the generated PCs. To evaluate the performance of PC-based solvers with different distribution area, $w$ is set to 1, 5 and 10 pixels, respectively. 

\begin{figure}[t]
	\begin{center}
		\includegraphics[width=0.8\linewidth]{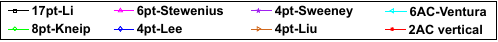}\\
		\subfigure[\scriptsize{${\varepsilon _{\bf{R}}}$ with image noise}]
		{
			\includegraphics[width=1.0\linewidth]{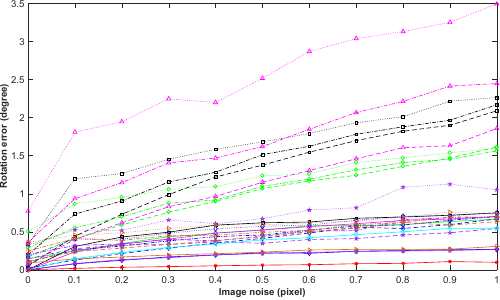}
		}\\
		\subfigure[\scriptsize{${\varepsilon _{\bf{t}}}$ with image noise}]
		{
			\includegraphics[width=1.0\linewidth]{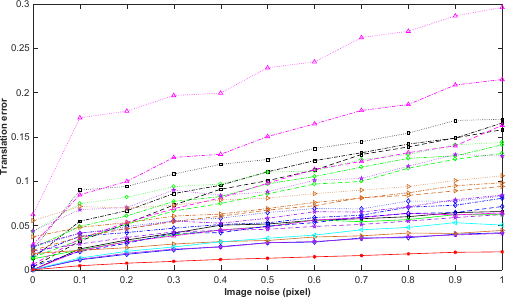}
		}
	\end{center}
	\vspace{-5pt}
	\caption{Rotation and translation error with varying image noise under random motion with known vertical direction. Solid line indicates using image point pairs of ACs. Dashed line, dash-dotted line and dotted line indicate using the hallucinated PCs, which are generated with different distribution area $w$ = 1, 5, 10 pixels, respectively.}
	\label{fig:RTRandomMotion_1ACto3PCs}
\end{figure}

\begin{table*}[t]
	\begin{center}
		\setlength{\tabcolsep}{0.9mm}{
			\scalebox{0.85}{
				\begin{tabular}{|c||c|c|c|c|c|c|c|c|c|}
					\hline
					\multirow{2}{*}{\small{Part}} &  \footnotesize{17pt-Li~\cite{li2008linear}} & \footnotesize{8pt-Kneip~\cite{kneip2014efficient}}  &  \footnotesize{6pt-}{\footnotesize{St.}}~\footnotesize{\cite{henrikstewenius2005solutions}}   &  \footnotesize{4pt-Lee~\cite{hee2014relative}} & \footnotesize{4pt-Sw.~\cite{sweeney2014solving}}& \footnotesize{4pt-Liu~\cite{liu2017robust}}& \footnotesize{6AC-Ven.~\cite{alyousefi2020multi}}& \footnotesize{\textbf{2AC~plane}}& \footnotesize{\textbf{2AC~vertical}} \\
					\cline{2-10}
					& ${\varepsilon _{\bf{R}}}$\qquad\ ${\varepsilon _{\bf{t}}}$      &  ${\varepsilon _{\bf{R}}}$\qquad\ ${\varepsilon _{\bf{t}}}$      &   ${\varepsilon _{\bf{R}}}$\qquad\ ${\varepsilon _{\bf{t}}}$     &   ${\varepsilon _{\bf{R}}}$\qquad\ ${\varepsilon _{\bf{t}}}$  &   ${\varepsilon _{\bf{R}}}$\qquad\ ${\varepsilon _{\bf{t}}}$   &   ${\varepsilon _{\bf{R}}}$\qquad\ ${\varepsilon _{\bf{t}}}$ &   ${\varepsilon _{\bf{R}}}$\qquad\ ${\varepsilon _{\bf{t}}}$  &   ${\varepsilon _{\bf{R}}}$\qquad\ ${\varepsilon _{\bf{t}}}$  &   ${\varepsilon _{\bf{R}}}$\qquad\ ${\varepsilon _{\bf{t}}}$\\
					\hline
					\small{01 (3376 images)}&  0.161 \ 2.680 &  0.156 \ 2.407 & 0.203 \  2.764 & 0.083 \ 1.780 & 0.078 \ 1.659  & 0.108 \ 1.941 &  0.143 \ 2.366 & 0.344 \ 2.284 & \textbf{0.057} \ \textbf{1.469}\\
					\hline					
		\end{tabular}}}
	\end{center}
	\caption{Rotation and translation error on \texttt{nuScenes} sequences (unit: degree).}
	\label{VerticalRTErrrornuScenes}
\end{table*}

Take relative pose estimation with known vertical direction for an example. A total of 1000 trials are carried out in the synthetic experiment. In each test, 100 ACs are generated randomly with 40*40 support region. In the RANSAC loop, six ACs and two ACs are selected randomly for the \texttt{6AC-Ventura} method and the proposed \texttt{2AC~vertical} method, respectively. The hallucinated PCs converted from a minimal number of ACs are used as input for the PC-based solvers. Thus, 6, 3 and 2 ACs are selected randomly for the \texttt{17pt-Li} solver~\cite{li2008linear}, the \texttt{8pt-Kneip} solver~\cite{kneip2014efficient}, and the solvers \texttt{6pt-Stew{\'e}nius}~\cite{henrikstewenius2005solutions}, \texttt{4pt-Lee} ~\cite{hee2014relative}, \texttt{4pt-Sweeney}~\cite{sweeney2014solving} and \texttt{4pt-Liu}~\cite{liu2017robust}, respectively. Note that the hallucinated PCs converted from  ACs are only used for hypothesis generation, and the inlier set is found by evaluating the image point pairs of ACs. The solution which produces the highest number of inliers is chosen. The other configurations of this synthetic experiment are set as same as using in Figures~\ref{fig:RT_1AC}(a) and (d) in the paper. 

Figure~\ref{fig:RTRandomMotion_1ACto3PCs} shows the performance of the PC-based solvers against image noise in the random motion case. The estimation results using the image point pairs of ACs are represented by solid lines. The estimation results using the hallucinated PCs generated with different distribution area are represented by dashed line ($w$ = 1 pixel), dash-dotted line ($w$ = 5 pixels) and dotted line ($w$ = 10 pixels), respectively. We have the following observations. (1) The PC-based solvers using the hallucinated PCs perform worse than using the image point pairs of AC. Because the conversion error between each AC and three PCs is newly introduced. It can be seen that the estimation error of PC-based solvers using the hallucinated PCs is not zero even for image noise-free input. Moreover, the hallucinated PCs generated by each AC are near each other which may be a degenerate case for the PC-based solvers. (2) The performance of PC-based solvers is influenced by the different distribution area of hallucinated PCs. Since a smaller distribution area causes smaller conversion error between ACs and PCs, the PC-based solvers have better performance with smaller distribution area. (3) The performance of the proposed \texttt{2AC vertical} method is best. Because the AC-based solvers use the relationship between local affine transformations and epipolar lines (Eq.~\eqref{eq:E6dof_Ac6} in the paper). This is a strictly satisfied constraint and does not result in any error for noise-free input. In addition, the \texttt{2AC vertical} method is robust to image noise and performs better than the \texttt{6AC-Ventura} method.

\subsection{Experiments on KITTI dataset}
We also show the empirical cumulative error distributions for KITTI sequence 00. These values are calculated from the same values which were used for creating Table~\ref{VerticalRTErrror} in the paper. Figure~\ref{fig:RTCDF} shows the proposed \texttt{2AC~vertical} offers the best overall performance in comparison to state-of-the-art methods.
\begin{figure}[htbp]
	\begin{center}
		\subfigure[]
		{
			\includegraphics[width=0.473\linewidth]{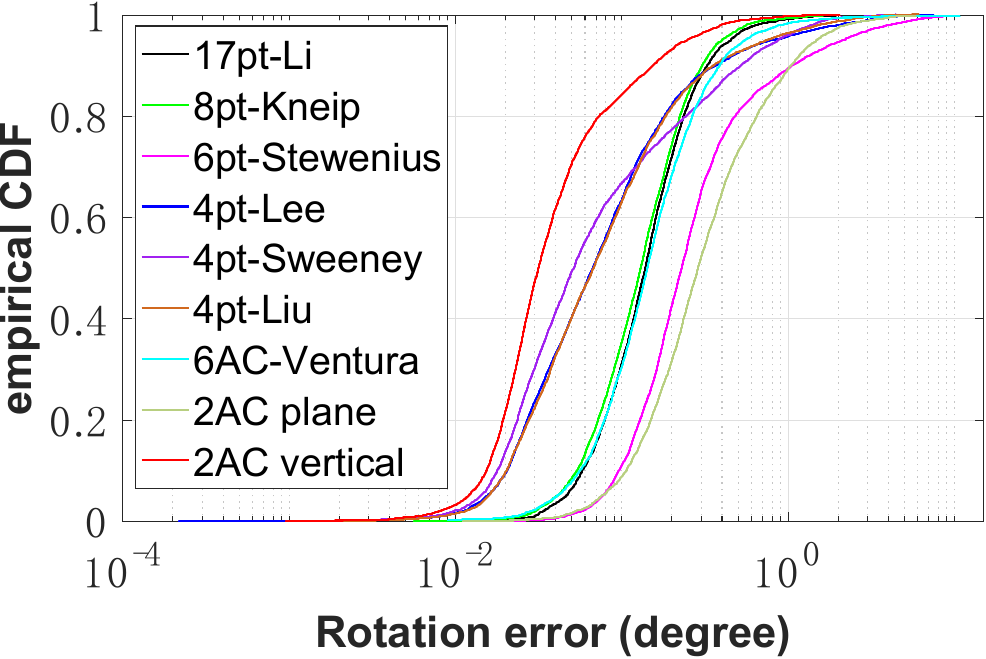}
		}
		\subfigure[]
		{
			\includegraphics[width=0.473\linewidth]{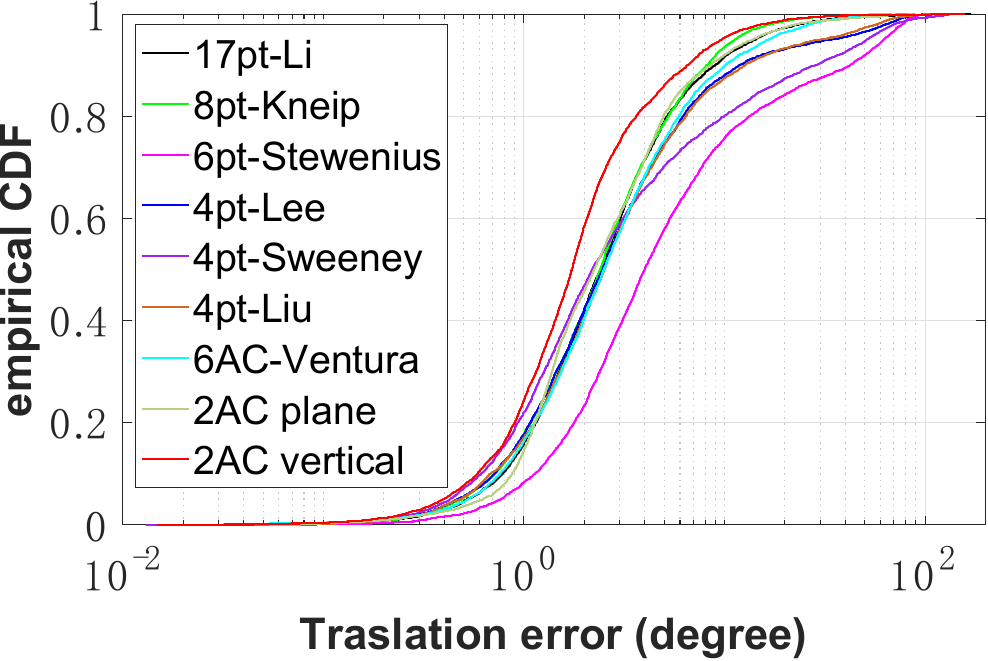}
		}
	\end{center}
	\vspace{-5pt}
	\caption{Empirical cumulative error distributions for KITTI sequence 00. (a) reports the rotation error. (b) reports the translation error. The proposed \texttt{2AC~plane} method and \texttt{2AC~vertical} are compared against \texttt{17pt-Li}~\cite{li2008linear}, \texttt{8pt-Kneip}~\cite{kneip2014efficient}, \texttt{6pt-Stew{\'e}nius}~\cite{henrikstewenius2005solutions},  \texttt{4pt-Lee}~\cite{hee2014relative}, \texttt{4pt-Sweeney} ~\cite{sweeney2014solving} and \texttt{4pt-Liu}~\cite{liu2017robust}.}
	\label{fig:RTCDF}
\end{figure}

\begin{figure*}[h]
	\begin{center}
		\subfigure[\texttt{8pt-Kneip}]
		{
			\includegraphics[width=0.30\linewidth]{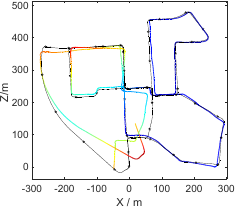}
		}
		\subfigure[\texttt{4pt-Sweeney}]
		{
			\includegraphics[width=0.30\linewidth]{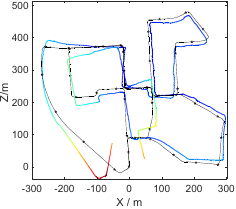}
		}
		\subfigure[\texttt{2AC~vertical}]
		{
			\includegraphics[width=0.355\linewidth]{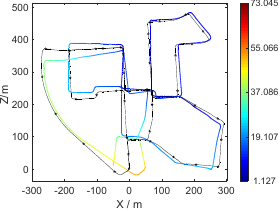}
		}
	\end{center}
	\caption{Estimated trajectories without any post-refinement. The relative pose measurements between consecutive frames are directly concatenated. The colorful curves are the trajectories estimated by \texttt{8pt-Kneip}~\cite{kneip2014efficient}, \texttt{4pt-Sweeney}~\cite{sweeney2014solving} and \texttt{2AC~vertical}. Black curves with stars are the ground truth trajectories. Best viewed in color.}
	\label{fig:trajectory}
\end{figure*}

To visualize the comparison results, the estimated trajectory for sequence 00 is plotted in Fig.~\ref{fig:trajectory}. We are directly concatenating frame-to-frame relative pose measurements without any post-refinement. The trajectory for \texttt{2AC~vertical} is compared with the two best performing comparison methods in sequence 00 based on Table~\ref{VerticalRTErrror} in the paper: \texttt{8pt-Kneip} in 6DOF motion case and \texttt{4pt-Sweeney} in 4DOF motion case. Since all methods were not able to estimate the scale correctly, in particular for the many straight parts of the trajectory, the ground truth scale is used to plot the trajectories. Then the trajectories are aligned with the ground truth and the color along the trajectory encodes the absolute trajectory error (ATE)~\cite{sturm2012benchmark}. Even though all trajectories have a significant accumulation of drift, it can still be seen that the \texttt{2AC~vertical} method has the smallest ATE among the compared trajectories.   

\subsection{Experiments on nuScenes dataset}
We also test the performance of our methods on the \texttt{nuScenes} dataset~\cite{Caesar_2020_CVPR}, which consists of consecutive keyframes from 6 cameras. All the keyframes of Part 1 are used for the evaluation and there are 3376 images in total. The ground truth pose is provided from a lidar map-based localization scheme. Similar to the experiments on \texttt{KITTI} dataset, the ACs between consecutive keyframes in each camera are established by applying the ASIFT~\cite{morel2009asift} detector. All solvers are used within RANSAC.

Table~\ref{VerticalRTErrrornuScenes} shows the results of the rotation and translation estimation for the Part1 of \texttt{nuScenes} dataset. The median error is used to evaluate the estimation accuracy. It can be seen that the proposed \texttt{2AC vertical} method offers the best overall performance among all the methods.

\end{document}